%% file: main.tex
\documentclass[10pt]{article} 

\usepackage[preprint]{tmlr}

\usepackage[T1]{fontenc}
\usepackage{booktabs}
\usepackage{placeins}
\usepackage{amssymb}
\usepackage{amsmath}
\usepackage{amsthm}
\usepackage{mathtools}
\usepackage{wrapfig}

\usepackage{graphicx}
\usepackage{subcaption}
\usepackage{color}
\usepackage{xcolor}

\usepackage[frozencache,cachedir=.]{minted} 

\usepackage{alltt}
\usepackage{csquotes}

\usepackage{makecell}
\usepackage{siunitx}
\usepackage{multirow,tabularx}

\usepackage{etoolbox} 
\newcommand{\ubold}{\fontseries{b}\selectfont}
\robustify\ubold

\newtoggle{tmlr}
\toggletrue{tmlr}

\usepackage{adjustbox}
\usepackage{txfonts}
\usepackage{pict2e}
\usepackage{anyfontsize}
\usepackage{tikz}

\usepackage{wasysym}

\usepackage{charter}
\usepackage{eulervm}

\usepackage{bm}

\usepackage{hyperref}
\usepackage{url}

\usepackage{listings} 

\newcommand{\p}{\text{.}}
\newcommand{\mR}{\mathbb R}
\newcommand{\mN}{\mathbb N}

\definecolor{my-green}{HTML}{677d00}
\definecolor{my-light-green}{HTML}{acd373}
\definecolor{my-lighter-green}{HTML}{e6ecce}
\definecolor{my-red}{HTML}{b13e26}
\definecolor{my-light-red}{HTML}{d38473}
\definecolor{my-blue}{HTML}{306693}
\definecolor{my-dark-blue}{HTML}{113377}
\definecolor{my-light-blue}{HTML}{73a7d3}
\definecolor{my-gray}{HTML}{999999}
\definecolor{my-orange}{HTML}{E69500}
\definecolor{my-light-orange}{HTML}{FFC353}

\newcommand{\gc}[1]{{\color{my-green} #1}}

\newcommand{\rc}[1]{{\color{my-red} #1}}

\newcommand{\bc}[1]{{\color{my-blue} #1}}
\newcommand{\kc}[1]{{\color{my-gray} #1}}

\newcommand{\kp}{\kc{\partial}}

\newtheorem{prop}{Proposition}

\newcommand{\fmu}{\lfloor\mu\rfloor}
\newcommand{\cmu}{\lceil\mu\rceil}

\newcommand{\E}{{\cal E}}
\newcommand{\mZ}{\mathbb Z}
\newcommand{\mB}{\mathbb B}
\newcommand{\bpi}{{\boldsymbol{\pi}}}

\newcommand{\x}{{\boldsymbol{x}}}

\lstdefinestyle{mystyle}{
    commentstyle=\color{gray},
    basicstyle=\ttfamily,
    breakatwhitespace=false,         
    breaklines=true,                 
    captionpos=b,                    
    keepspaces=true,                 
    numbers=left,                    
    numbersep=1em,
    numberstyle=\sffamily\small\color{gray},   
    showspaces=false,                
    showstringspaces=false,
    showtabs=false,                  
    tabsize=2
}

\newcommand{\ndot}{{\hspace{0.1em}
  \mathrel{%
    \tikz[line cap=round, line join=round]
    \filldraw (0, 0) circle (0.8pt)
    ;%
  }%
}}

\newcommand\blfootnote[1]{%
    \begingroup
    \renewcommand\thefootnote{}\footnotetext{\textsuperscript{$\circ$} #1}
    \endgroup
}

\def\month{07}  
\def\year{2026} 
\def\openreview{\url{https://openreview.net/forum?id=d1WKFlKFEa}} 

\title{Predicting integers from continuous parameters}
\hypersetup{
    pdftitle={Predicting integers from continuous parameters},
    pdfauthor={Bas Maat and Peter Bloem}
}

\author{\name Bas Maat$^\circ$ \email research@revstaudio.com \\
      \addr REVST
      \AND
      \name Peter Bloem$^\circ$ \email contin@peterbloem.nl \\
      \addr Learning \& Reasoning Group \\
      Vrije Universiteit Amsterdam}

\begin{document}

\blfootnote{Shared first authors.}

\maketitle

\begin{abstract}
\noindent We study the problem of predicting numeric labels that are constrained to the integers or to a subrange of the integers. For example, the number of up-votes on social media posts, or the number of bicycles available at a public rental station. While it is possible to model these as continuous values, and to apply traditional regression, this approach changes the underlying distribution on the labels from discrete to continuous. Discrete distributions have certain benefits, which leads us to the question whether such integer labels can be modeled directly by a discrete distribution, whose parameters are predicted from the features of a given instance. Moreover, we focus on the use case of output distributions of neural networks, which adds the requirement that the \emph{parameters} of the distribution be continuous so that backpropagation and gradient descent may be used to learn the weights of the network. We investigate several options for such distributions, some existing and some novel, and test them on a range of tasks, including tabular learning, sequential prediction and image generation. We find that overall the best performance comes from two distributions: \textit{Bitwise}, which represents the target integer in bits and places a Bernoulli distribution on each, and a discrete analogue of the Laplace distribution, which uses a distribution with exponentially decaying tails around a continuous mean.
\end{abstract}

\section{Introduction}
    \input{sections/introduction.tex}

\section{Related Work}
    \input{sections/rl.tex}

\section{Methods}
In this section, we briefly detail the three existing distributions that satisfy our requirements (discretized Normal, discretized Laplace and Weibull), and then present our three partly novel approaches (Dalap, Danorm and Bitwise). 

    \input{sections/dnormal.tex}

    \input{sections/laplace.tex}
    \input{sections/dweib.tex}
    \input{sections/contin.tex}
    \input{sections/bitwise.tex}

\section{Experiments}
    \input{sections/experiments.tex}

\section{Results}
    \input{sections/results.tex}

\section{Conclusion}
    \input{sections/conclusion.tex}

\subsubsection*{Author Contributions}
Both authors contributed equally to ideation, coding and writing of the article. 

\subsubsection*{Acknowledgments}
We used the DAS-6 \cite{7469992} infrastructure at Vrije Universiteit Amsterdam, funded by NWO and participating universities, as well as SURF (www.surf.nl) under the grant EINF-5362.

\bibliography{ref}
\bibliographystyle{tmlr}

\newpage
\appendix
\section{Appendix}
    \input{sections/appendix.tex}

\end{document}

%% file: sections/introduction.tex
In many regression tasks, the value to be predicted is discrete. One example is predicting the net migration between countries or cities. This can take any negative or positive value, but it must always be an integer. In other cases, the target value lies in a subset of the integers. A particularly common case is \emph{count data}: predicting a number of items, like the number of bicycles available at a location for public bike rental service. We may even predict values in an interval, like the 256 values per color channel that an RGB pixel can take.

In machine learning, this problem is often solved by \emph{continuous relaxation}: we simply interpret the label as a continuous value, train our model with a continuous loss like mean-squared error,  and round the prediction to the nearest integer at inference time. While ad-hoc, this approach often works reasonably well. In some cases, however, like the pixel prediction example, such continuous relaxations fall far short of other approaches \citep{loaizaganem2019continuousbernoullifixingpervasive,rybkin2021simple}.

A more principled approach is to define a probability distribution on the allowed target labels. We can then maximize the log-likelihood simply by taking as our loss the negative log-probability that the model assigns to the correct value. One simple example, often used in image generation \citep{rybkin2021simple,oord2016conditionalimagegenerationpixelcnn}, is to use a categorical distribution: the model produces a probability vector with one element for every allowed target value. This has two downsides. First, on large numbers of allowed values it quickly becomes impractical, and for the full range of integers and natural numbers it is impossible. Second, it ignores the ordinal nature of the target values: the outputs are modeled as a set and their ordering needs to be learned from the data. 

Here, we investigate the feasibility of using genuine distributions on the integers---or subranges of the integers---for regression tasks. We investigate several existing candidates, and introduce three approaches that, while not entirely novel, have never been tested in this precise setting before.

Which brings us to our research question: For a task which includes integer labels, can we perform regression \emph{without} re-interpreting the labels as continuous values? That is, can we devise a distribution on the integers, possibly constrained to a given range, which is suitable for the purposes of regression? 

This is not a novel problem, and some approaches exist. However, we will focus on regression models that are optimized by backpropagation and gradient descent: i.e. \emph{neural networks}. This gives us three additional requirements:
\begin{enumerate}
\item While the sample space of the distribution (the integers) is \emph{discrete}, its parameters should be \emph{continuous}. This will allow us to define a gradient of the negative log-probability with respect to the parameters, which we can then backpropagate through whatever mechanism produced the parameters.
\item The gradient should be \emph{well-behaved}. It should, ideally, be defined everywhere, and not come too near 0 at any point in the parameter space, or have too many discontinuities.
\item The distribution should be able to ``peak'' around an arbitrary point. That is, if we make a prediction $x$ about which we are very certain, then we should be able to set the parameters in such a way that almost all probability is assigned to $x$.\footnotemark
\end{enumerate}

\footnotetext{This rules out, for example, the Poisson distribution. Its parameter $\lambda$ is continuous and its sample space discrete, but the probability mass cannot be made arbitrarily concentrated on a given integer.}

Such a distribution would have many potential benefits. Foremost, it provides a kind of conceptual simplicity. To illustrate, consider the units of the loss function. For a discrete distribution, the log-probability of event $x$ is a value in bits. It  expresses the bits required to communicate $x$, under an optimal encoding for the distribution \cite[Section 3.2]{grunwald2007minimum}. This scale has a meaningful minimum at 0: with $p(x) = 1$, we can communicate $x$ with 0 bits, which is the best we can do. 

For continuous distributions, this interpretation breaks down: the log-probability \emph{density} does not have any units with a natural interpretation and is not restricted to positive values. Moreover, it is not invariant to  a change of variables \cite[Theorem 8.6.4]{cover1999elements}: we get different values depending on which coordinates we use for the target variable.

This conceptual elegance has practical benefits. Consider a setting where we must predict multiple values, say a flight simulator where we choose our engine thrust from an integer range and our flight stick position from the unordered set $\{\text{up}, \text{down}, \text{left}, \text{right}, \text{neutral}\}$. Modeling both values as probabilities results in log-losses that can be neatly and simply summed together. While it's not guaranteed that the losses will balance, adding them together without a multiplier gives us a reasonable place to start, since the magnitude of the two losses is proportional to the representational complexity of the two values. That is, we can describe the current state of the controls in $- \log p(\text{thrust}) - \log p(\text{stick})$ bits, so summing the two log-losses together is a natural way to balance them.

Compare this with the case where we model the thrust with a continuous relaxation. In that case we get a value of arbitrary magnitude for the thrust's log-density, which could even be negative. When we add the log probability density to the log-probability for the stick position, we are forced to add a multiplier, which must be tuned. Moreover the sum of the two losses is no longer an interpretable value.

This also has implications for situations where we predict a single value, since we often want to compare losses between models. With log-probability densities, this is not an option unless the models are guaranteed to use the same distribution with the same parameterization. With log-probabilities, even models using categorical distributions can be compared to models using ordinal distributions. This is especially apparent in generative modeling. Model comparison is a challenging task, with authors often relying on ad-hoc measures like inception scores. However, when the model produces probabilities for a given instance, the bits per instance can be computed and compared between radically different models. This has provided strong signals in domains like text generation \citep{radford2019language} and image generation \citep{ho2020denoising}.

In short, discrete probability distributions are promising tools if they can be made to work as the output distributions of neural nets. The categorical distribution, in the form of a softmaxed output vector is a tried-and-tested candidate. However, it (a) only works for finite ranges (b) is parameter inefficient for large ranges (c) doesn't model the ordinality of the sample space. To achieve the same benefits for unbounded integer ranges, with efficient use of parameters and ordinality hardcoded, we need integer distributions with the properties given above.  

We identify three existing candidates to be used as baselines and introduce three more novel approaches.\footnotemark

In order, these are:
\begin{description}
\item[Discrete Weibull] A discrete analogue of  the Weibull distribution. 
\item[Discretized Normal] The normal distribution, discretized by assigning the probability mass on the interval $[x-\frac{1}{2}, x+ \frac{1}{2})$ to the integer $x$.
\item[Discretized Laplace] The Laplace distribution, discretized in the same way.
\item[Dalap] Short for \emph{discrete analogue to the Laplace distribution}. Takes the form $p(n) \propto \gamma^{|n- \mu|}$. This distribution was introduced with a discrete parameter $\mu$ in \citep{inusah2006discrete}. We extend this to a continuous parameter, and derive the partition function.
\item[Danorm] Short for \emph{discrete analogue to the Normal distribution}. Takes the form $p(n) \propto \gamma^{(n- \mu)^2}$. Introduced in a different form in \citep{kemp1997characterizations}. We use the form above, by analogy with Dalap, and approximate the partition function. 
\item[Bitwise] This distribution is based on the common idea of representing an integer as a bitstring, in sign-magnitude representation, and having a model produce a Bernoulli distribution for each bit. While this representation is common in neural networks, we have not found references that interpret the resulting model as a distribution on the integers.
\end{description}


\footnotetext{ Code is available at \url{https://github.com/R3VST/predicting-integers-from-continuous-parameters}.}

\noindent We study these distributions in three settings. First, we define three tabular regression tasks with integer labels. These are based on the number of bicycles available at public rental stations, the number of upvotes on content in a social media setting and the net migration between countries. Next, we test the models in a sequential prediction setting. Specifically a MIDI music prediction task. We isolate the issue of predicting the ticks on the following note, given the sequence of events that came before. Finally, we show a proof-of-concept for integer prediction in the image generation domain. We train an autoregressive pixel model on images, treating each channel of each pixel as an integer prediction problem restricted to the range $[0, 255]$.

Our findings are that Dalap outperforms the existing distributions in most cases when performance is measured in negative log-probability. For point-prediction accuracy, continuous squared-error regression remains a strong baseline, although mixture models with discrete distributions can outperform it on some datasets. Danorm provides a natural squared-error-like discrete loss, but its rapidly decaying tails often lead to substantially worse negative log-probability, and its numerical partition function makes it impractical for the image experiments.

In most tabular data, relaxing the problem to continuous target labels leads to a better RMSE score than any of the discrete distributions, suggesting that in most settings, that is likely still a competitive approach if performance outweighs conceptual neatness or interpretability. However, when a discrete distribution is required, Bitwise, Dalap and Danorm are likely the best options.

%% file: sections/rl.tex
We briefly discuss the large variety of models for count data (i.e. those supported on $\mathbb N$) and for integer data (supported on $\mathbb Z$). Most of these do not fit our requirements outlined in the introduction. Others have had to be excluded to keep the scope of the present research manageable. 

\subsection{Count models}
A subset of regression on integer-valued labels is that of modeling \emph{count data}, where the labels are restricted to the non-negative or positive integers and usually represent counts of items or events. This problem is well-studied and is often tackled with Poisson or negative binomial distributions \citep{cameron2013regression,hilbe2014modeling}.

While our approach shares a lot with this existing line of research, the focus on neural network modeling changes the problem in some significant ways. First, we are not interested in closed-form solutions, or convergent optimization for the optimal parameters, since these properties are not available for the weights of the neural network in any case. Second, in our setting the parameters of the distribution are required to be continuous, so that a gradient can be defined for them. And, finally, we are only interested in conditional modeling: predicting integer labels given some features or other conditioning information, rather than fitting a single distribution to a whole population. 

\paragraph{Poisson-based} A very common family of approaches is built on the Poisson distribution. Since this distribution has only one parameter, the mean and variance must be related. In fact in the Poisson distribution, they are equal. This is referred to as equidispersion. Various extensions have been proposed \citep{bonat2018extended} to allow for overdispersion (a variance higher than the mean) and underdispersion (a variance lower than the mean). However, this still puts a lower or upper bound on the variance that depends on the chosen mean. For our purposes, we require a distribution where the  variance can be set independently from the mean, as it is in, for example, the normal distribution. Ideally, the distribution can be parameterized directly in terms of the required location and dispersion. Later models in this family offer great flexibility in the dispersion \citep{shmueli2005useful}, but this often comes at the cost of difficult to estimate probabilities. For these reasons we consider this family as a whole out of scope. 

\paragraph{Discrete Weibull} One prominent example of a discrete distribution with continuous parameters, is the \emph{Discrete Weibull distribution} \citep{weibdist,collins2020discrete,englehardt2011discrete}. We include this as a baseline and describe it briefly in the next section. Generalizations include the exponentiated discrete Weibull distribution \citep{nekoukhou2015exponentiated}, which can take bimodal ("bathtub") shapes as well as unimodal. For our purposes, this is not required behavior, so we consider generalizations out of scope. 

\paragraph{Geometric distributions} As the tails of Dalap follow a geometric progression, it is related to the geometric distribution. Some other discrete generalizations of the geometric distribution exist. One example is \citep{gomez2010another}, which allows the mode to take values other than 0. However, in this distribution, the mean is not a straightforward function of the parameters, and the mean and dispersion are not independent, which are important properties for our use case.

\paragraph{Others} There are other discrete distributions, which due to their shape, we did not consider promising candidates. These include the discrete Lindley distribution \citep{gomez2011discrete}, the discrete Rayleigh distribution \citep{roy2004discrete} and the discrete Burr-Hatke distribution \citep{el2020discrete}. In general these models are likely to show overdispersion in a neural network setting. That is, the data distribution (conditioned on a particular set of feature-values) is likely to form a low-variance, high peaked distribution around a single value, which the distribution cannot represent.

\subsection{Integer models}

The broader problem of \emph{integer regression}, where the target label is a discrete, but potentially negative value is less recognised in the literature. We expect that this is because the continuous relaxation poses fewer problems in these domains than it does for count data. 

\subsubsection{Discretization of continuous distributions}

A straightforward approach to building a distribution on the integers is to start with a continuous distribution, like a Normal distribution, and to discretize a sample from it. We include two such discretizations below, using a simple rounding operator. In \citep{tovissode2021discretization} a probabilistic discretization scheme is introduced. This scheme has the benefit over the rounding approach that it preserves expectations.

A slightly different approach is that of creating discrete \emph{analogues} \citep{alzaatreh2012discrete}. Dalap and Danorm are instances of this approach. In these cases one proceeds by taking key attributes of a continuous distribution, like a maximum entropy property, and finding the integer distribution that shares these properties. In some cases, like the discretized Normal distribution, both approaches yield the same result, but in others, like the discretized Laplace, different solutions may emerge. For a general survey of discretization and discrete analogues, we refer the  reader to \citep{chakraborty2015generating}.

\subsection{Other approaches} Cameron and Johansson \citep{cameron1997count} introduce a method to modify the density of an arbitrary discrete distribution, with parameters $\theta$ by a polynomial with parameters $\mathbf a$. The parameters $\mathbf a$ modify the moments of the base distribution in a predictable way, allowing for over- and underdispersion, and the derivative of the log-likelihood with respect to $\theta$ and $\mathbf a$ is readily available.  We treat this as an extension that may be applied to all integer distributions, including our baselines, and our newly introduced approaches, as well as distributions like the Poisson, which were previously ruled out as viable candidates.

%% file: sections/dnormal.tex
\subsection{Discretized Normal (\texttt{dnormal})}
\label{sec:dnormal}

As noted in the related work, there are various ways to discretize the normal distribution to the integers. For our purposes, the most direct approach is simply to sample from a normal distribution and then to round the output to the nearest integer. 

More formally, let $z \sim N(\mu, \sigma)$ with some mean $\mu$ and standard deviation $\sigma$. Let $r : \mR \to \mZ$ be the rounding function which maps the real values $[n-\frac{1}{2}, n+\frac{1}{2})$ to the integer value $n$. Then, we call the distribution on $r(z)$ the \emph{discretized normal distribution} $N'$ with parameters $\mu, \sigma$.

The probability mass $N'(n \mid \mu, \sigma^2)$ can be computed simply from the cumulative distribution function of $N$:
\begin{align*}
N'(n \mid \mu, \sigma^2) &= N\left(n-\frac{1}{2} \leq z < n + \frac{1}{2} \mid \mu, \sigma\right)\\
&= N\left(z < n + \frac{1}{2} \mid \mu, \sigma\right) - N\left(z < n - \frac{1}{2} \mid \mu, \sigma\right) \p
\end{align*}

The cumulative distribution can be written in terms of the \emph{error function} as follows.

\begin{equation*}
N(z < b \mid \mu, \sigma) = \frac{1}{2}\left [1+\text{erf}\left (\frac{b-\mu}{\sigma\sqrt{2} + \epsilon}\right)\right] 
    \label{eq:dnormal_cdf}
\end{equation*}
with

\begin{equation*}
    \text{erf}(x) = \frac{2}{\sqrt{\pi}} \int_0^x e^{-t^2} \, dt \p
\end{equation*}

While the error function has no closed-form solution, various approximations are available which are closed-form.\footnotemark~For the backward pass, the derivative
\[
\frac{\kp \text{erf}(x)}{\kp x} = \frac{2}{\sqrt{\pi}} e^{-x^2}
\]
is easily computed. 

\footnotetext{In our experiments, we use the Pytorch implementation of the error function. This defers to various closed-source systems (depending on backend), for which the approximation used is not usually documented. We use the CUDA backend in all experiments.}

%% file: sections/laplace.tex
\subsection{Discretized Laplace (\texttt{dlaplace})}

We discretize the Laplace distribution in the same manner as the normal distribution. 
Let
\[
z \sim \text{Laplace}(\mu, b)
\]
be a continuous Laplace random variable with location parameter \(\mu\) and scale parameter \(b\). Its probability density function is given by
\[
f(z \mid \mu, b) = \frac{1}{2b}\exp\left(-\frac{|z-\mu|}{b}\right).
\]

Let \(F(z \mid \mu, b)\) denote the cumulative distribution function of the Laplace distribution:
\[
F(z \mid \mu, b)=
\begin{cases}
\frac{1}{2}\exp\left(\frac{z-\mu}{b}\right) & \text{if } z < \mu,\\[1mm]
1 - \frac{1}{2}\exp\left(-\frac{z-\mu}{b}\right) & \text{if } z \geq \mu.
\end{cases}
\]
Let \(r:\mathbb{R} \to \mathbb{Z}\) be defined as in Section~\ref{sec:dnormal} We then call the distribution of \(r(z)\) the \emph{discretized Laplace distribution}.

The probability mass assigned to an integer \(n\) is computed as
\[
P(n \mid \mu, b) = F\Bigl(n+\frac{1}{2} \mid \mu, b\Bigr) - F\Bigl(n-\frac{1}{2} \mid \mu, b\Bigr).
\]


%% file: sections/dweib.tex
\subsection{Discrete Weibull}
The discrete Weibull distribution is an extension of the Weibull distribution \cite{weibdist}, specifically designed for discrete, non-negative data, making it well-suited for modeling count data. Unlike the continuous Weibull distribution, the discrete version is characterized by its ability to model varying degrees of dispersion in count data. The probability mass function (PMF) of the discrete Weibull distribution is given by equation \ref{eq:dweib_pmf}, where $\alpha > 0$ is the scale parameter and $\beta > 0$ is the shape parameter. This difference between consecutive values ensures that each integer count $x$ has a defined probability. 

\begin{equation}
    p(x) = \exp\left (- \left(\frac{x}{\alpha}\right)^\beta\right ) - \exp \left (-\left (\frac{x+1}{\alpha}\right)^\beta\right)
    \label{eq:dweib_pmf}
\end{equation}

Which leads to the following log-probability.

\begin{equation*}
    \log p(x) = \log\left(1 - \exp\left(\left(\frac{x}{\alpha}\right)^\beta - \left(\frac{x+1}{\alpha}\right)^\beta\right)\right) - \left(\frac{x}{\alpha}\right)^\beta
    \label{eq:dweib_logprob}
\end{equation*}

This implementation of the log-probability may lead to unstable gradients. In our implementation, we force the log-probability to underflow to negative infinity in case both $-\left(\frac{x}{\alpha}\right)^\beta$ and $-\left (\frac{x+1}{\alpha}\right)^\beta$ go to negative infinity. For details, see Section~\ref{section:dweib_imp}.

%% file: sections/contin.tex
\subsection{Discrete analogue of the Laplace distribution (\texttt{dalap})}
\label{section:dalap}

In \cite{inusah2006discrete}, the authors introduce the following discrete analogue of the Laplace distribution:

\begin{equation}
p(n \mid \mu, \gamma) = \bc{\frac{1- \gamma}{1 + \gamma}} \gamma ^{|n - \mu|} \p \label{eq:dalap-disc}
\end{equation}

Here, $\gamma$ is a continuous value between $0$ and $1$, and $\mu$ and $n$ are integers. For our purposes, this is not yet a suitable distribution, because $\mu$ is not continuous.

A simple approach is to assume that $\mu$ is continuous, and to let $p(n \mid \mu, \gamma) \propto \gamma^{|n-\mu|}$, where $n- \mu$ is now a continuous value. In order to achieve this, we must change the \bc{partition function}. First, let

\begin{align*}
    f &= \mu-\fmu &
    c &= \cmu-\mu & \rc{\text{dist. to the nearest neighbor of $\mu$}~\footnotemark}\\
\end{align*}

So that we have
\[
 p(n \mid \mu, \gamma) \propto \begin{cases}
 \gamma^f\gamma^{\fmu - n} &\text{if } n < \mu \\ 
 \gamma^c\gamma^{n - \cmu} &\text{if } n \geq \mu 
 \end{cases}
\]

Then, for the sum of the unnormalized probability mass $\bc{z}$, we have

\begin{align*}
\bc{z} &= \sum_{i\in \mathbb Z} \gamma^{|i-\mu|} = \sum_{i=-\infty}^{\infty}\begin{cases}
 \gamma^f\gamma^{\fmu - i} &\text{if } i \leq \mu \\ 
 \gamma^c\gamma^{i - \cmu} &\text{if } i > \mu 
 \end{cases} \\
 &= \gamma^f\sum_{i <= \mu} \gamma^{\fmu - i} + \gamma^c\sum_{i > \mu} \gamma^{i - \cmu} \\
 &= \gamma^f\sum_{j = 0}^\infty \gamma^{j} + \gamma^c\sum_{j = 0}^\infty \gamma^j = (\gamma^c + \gamma^f) \frac{1}{1 - \gamma} \p 
\end{align*}

Where the last step uses the well known relation $\sum_{i=0}^\infty \alpha^i = \frac{1}{1 - \alpha}$.

\footnotetext{We define $\lceil\mu\rceil = \lfloor\mu\rfloor+1$ to avoid instabilities when $\mu$ takes an integer value.
}

Using $\bc{z}^{-1}$ as a partition function, we get 

\[
p(n \mid \mu, \gamma) = \frac{1 - \gamma}{\gamma^{\mu - \fmu} + \gamma^{\cmu - \mu}} \gamma^{|n - \mu|}
\]

which reduces to (\ref{eq:dalap-disc}) in cases where $\mu$ takes an integer value, (as the denominator reduces to $\gamma^0 + \gamma^1 = 1 + \gamma$).

\begin{figure}[tb]
    \centering
    \includegraphics[width=1\linewidth]{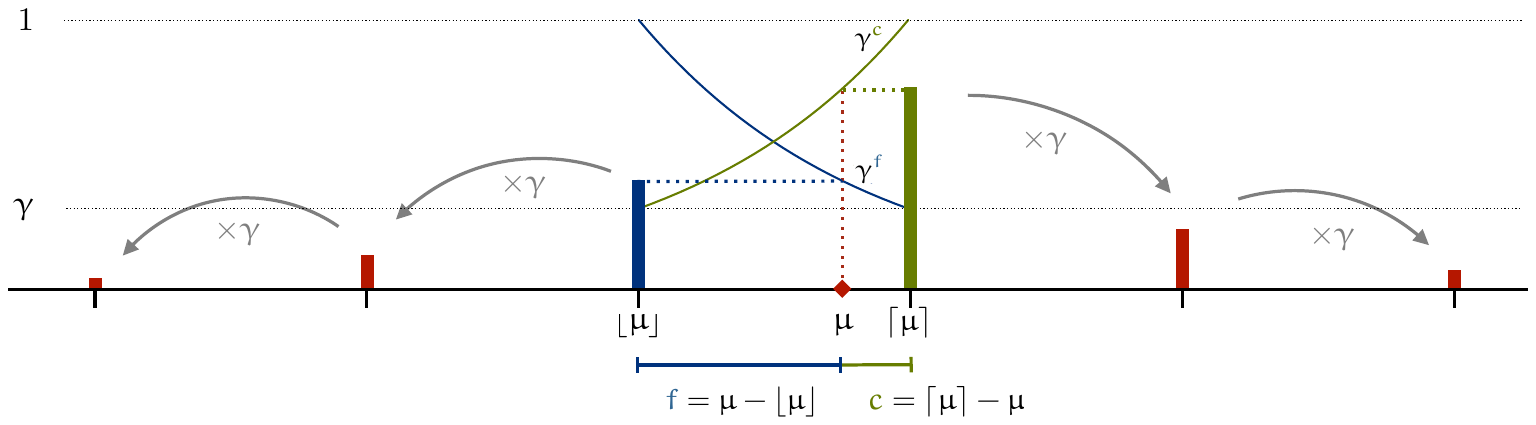}
    \caption{The principle behind Dalap. The neighbors of $\mu$ are assigned probability mass between $\gamma$ and $1$ by an exponential function of their distance to $\mu$. The rest of the distribution decays geometrically from these two values.}
    \label{fig:enter-label}
\end{figure}

Note that even in cases where $\mu$ is just below or just above an integer value, this definition ensures that a small change in $\mu$ results in a small change to $p$, as required for effective use in gradient descent. 

\subsection{Bounded versions}

To constrain the distribution to a subset $S$ of $\mathbb Z$ we take the unconstrained version of $p$ and condition on a subset of $\mathbb Z$. In such cases, the probability mass function is the same as above, except for $\bc{z}$.\footnotemark

\footnotetext{By the definition of conditional probability \[p(n\mid n \in S) = \frac{p(n)}{\sum\limits_{m \in S} p(m)} = \frac{\bc{z}^{-1}q(n)}{\sum\limits_{m}\bc{z}^{-1}q(m)} = \frac{1}{\sum\limits_m q(m)} q(n)\].}

We specify two cases: on a one-way bounded interval $[l, \infty)$, with  $\mathbb N$ as a special case, and  on a two-way bounded interval $[l, u]$.

For the first, assume first that the interval is $[0, \infty)$, from which the general case will follow by change of variables. We have

\begin{align*}
\bc{z_1} &= \sum_{n=0}^\infty \begin{cases}
        \gamma^f\gamma^{\lfloor\mu\rfloor - n} &\text{if }n \leq \lfloor\mu\rfloor \\ 
        \gamma^c \gamma^{n - \lceil\mu\rceil} &\text{if }n \geq \lceil\mu\rceil \\ 
    \end{cases} \\
    &= \gamma_f\sum_{n=0}^{\lfloor\mu\rfloor} \gamma^{\lfloor\mu\rfloor - n} + \gamma_c\sum_{n=\lceil\mu\rceil}^\infty \gamma^{n - \lceil\mu\rceil} \\
        &= \gamma^f\sum_{i=0}^{\lfloor\mu\rfloor} \gamma^{i} + \gamma^c\sum_{i=0}^\infty \gamma^{i} \\
    &= \gamma^f \frac{1-\gamma^{\lfloor\mu\rfloor + 1}}{1-\gamma} + \gamma^c \frac{1}{1-\gamma} \p & \rc{\text{note that}\sum_{i=0}^{n} \alpha^i = \frac{1-\alpha^{n+1}}{1-\alpha}} \\
    &= \frac{\gamma^{\mu - \fmu} (1-\gamma^{\lfloor\mu\rfloor + 1})}{1-\gamma} + \frac{\gamma^{\cmu - \mu}}{1-\gamma} \p & \\
    &= \frac{\gamma^{\mu - \fmu} - \gamma^{\kc{\fmu} + 1 + \mu - \kc{\fmu}} + \gamma^{\cmu - \mu}}{1-\gamma} \p & \\  
    &= \frac{\gamma^{\mu - \fmu} - \gamma^{\mu + 1} + \gamma^{\cmu - \mu}}{1-\gamma} \p & \\       
\end{align*}

Which gives us 

\[
p_{\mu, \gamma}(n \mid n \in \mN) = \frac{1-\gamma}{\gamma^{\mu - \fmu} + \gamma^{\cmu - \mu} - \gamma^{\mu + 1} } \gamma^{|n- \mu|} \p
\]

For other bounds $[l, \infty)$, we reparametrize with\footnotemark 
\[
p_{\mu, \gamma}(n \mid n \geq l) = p_{\mu+l, \gamma}(n+l \mid n \in \mN) \p
\]

\footnotetext{In the rare case of an interval bounded from above, we can reparametrize by first multiplying by $-1$.}

Finally, for a two-way bounded interval $[l, u]$, we have

\begin{align*}
z_2 &= \sum_{n=l}^u \begin{cases}
        \gamma^f\gamma^{\lfloor\mu\rfloor - n} &\text{if }n \leq \lfloor\mu\rfloor \\ 
        \gamma^c \gamma^{n - \lceil\mu\rceil} &\text{if }n \geq \lceil\mu\rceil \\ 
    \end{cases} \\
    &= \gamma^f\sum_{n=l}^{\lfloor\mu\rfloor} \gamma^{\lfloor\mu\rfloor - n} + \gamma^c\sum_{n=\lceil\mu\rceil}^u \gamma^{n - \lceil\mu\rceil} \\
    &= \gamma^f\sum_{i=0}^{\lfloor\mu\rfloor-l} \gamma^i + \gamma^c\sum_{i=0}^{u-\lceil\mu\rceil} \gamma^i \\    
    &= \gamma^f \frac{1-\gamma^{\lfloor\mu\rfloor -l + 1}}{1-\gamma} + \gamma^c \frac{1-\gamma^{u-\lceil\mu\rceil + 1}}{1-\gamma} \\
    &= \frac{\gamma^{\mu - \fmu}(1-\gamma^{\lfloor\mu\rfloor -l + 1}) }{1-\gamma} +  \frac{\gamma^{\cmu - \mu}(1-\gamma^{u-\lceil\mu\rceil + 1})}{1-\gamma} \\
   &= \frac{ \gamma^{\mu - \fmu} - \gamma^{\lfloor\mu\rfloor -l + 1 + \mu - \fmu} + \gamma^{\cmu - \mu} -\gamma^{u-\cmu + 1 +\cmu - \mu}}{1-\gamma} \\
   &= \frac{ \gamma^{\mu - \fmu} + \gamma^{\cmu - \mu} - \gamma^{1 + \mu - l} -\gamma^{1 - \mu + u}}{1-\gamma} \\   
\end{align*}

Which gives us 

\[
p_{\mu, \gamma}(n \mid n \in [l, u]) = \frac{1-\gamma}{ \gamma^{\mu - \fmu} + \gamma^{\cmu - \mu} - \gamma^{1 + \mu - l} -\gamma^{1 - \mu + u} } \gamma^{|n- \mu|} \p
\]

The negative log probabilities for all three versions simplify to 
\[
- \log p_{\mu, \gamma}(n) = \gc{|\mu - n|} \log \frac{1}{\gamma} + \log \bc{z}
\]

where $\bc{z}$ depends only on $\gamma$ in the unbounded case, and on both $\mu$ and $\gamma$ in the bounded cases.

Note that this preserves a key property of the non-discrete Laplace distribution that we did not see in the simpler discretized Laplace above: the loss is a linear function of the error $\gc{|\mu - n|}$ (with $\log \frac{1}{\gamma}$ a positive value, since $\gamma < 1$). That is, to minimize the loss, we minimize \gc{the absolute distance between the model prediction $\mu$ and the target value $n$}. This loss is then tempered by the amount of uncertainty in a multiplicative factor $\log \frac{1}{\gamma}$ and a regularization term $\log \bc{z}$.

See Appendix \ref{section:dalap-analysis} for a further analysis of the terms of $-\log p(n)$.
Implementation details for numerical stability of the bounded partition function are provided in Appendix \ref{app:stability}.

\subsection{Mean and variance.}

We will finish up this section with a brief analysis of the mean and variance of the Dalap distribution. For its intended purpose as a neural network loss function, we primarily want to show that the variance can be made sufficiently small around a given expected value, and that this expected value is a simple function of the parameters. In the case of Dalap, we can show that the parameter $\mu$, while not equal to the expected value in general, does indeed coincide with the expected value as the variance goes to zero, which we accomplish by letting $\gamma$ go to zero. 

\begin{prop} \label{prop:dalap-exp}
In the unbounded case, the expected value of $p(n \mid \mu,\gamma)$ is 
$$
\E_{\mu, \gamma} n = \bc{\frac{\gamma^f}{\gamma^f + \gamma^c}}\left(\fmu - \gc{\frac{\gamma}{1-\gamma}}\right) + \bc{\frac{\gamma^c}{\gamma^f + \gamma^c}}\left(\cmu + \gc{\frac{\gamma}{1-\gamma}}\right) 
$$
\end{prop}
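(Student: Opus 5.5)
The plan is to split the distribution into a mixture of two one-sided geometric pieces, one on each side of $\mu$, and compute the expectation of each piece separately. Using the case split already set up for the partition function, with $f = \mu - \fmu$ and $c = \cmu - \mu$, we have $p(n) = \bc{z}^{-1}\gamma^f\gamma^{\fmu - n}$ for $n \leq \fmu$ and $p(n) = \bc{z}^{-1}\gamma^c\gamma^{n-\cmu}$ for $n \geq \cmu$. Here $\bc{z} = (\gamma^f+\gamma^c)/(1-\gamma)$ as derived above. Throughout I use the footnote convention $\cmu = \fmu + 1$, so the two index sets partition $\mathbb Z$ even when $\mu$ is an integer.

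First, I will compute the total mass of each side. The lower side has mass $\bc{z}^{-1}\gamma^f\sum_{j\ge 0}\gamma^j = \gamma^f/(\gamma^f+\gamma^c)$, and similarly the upper side has mass $\gamma^c/(\gamma^f+\gamma^c)$. This identifies the blue mixture weights in the statement.

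Next, I will compute the conditional expectation on each side. Conditioned on $n \leq \fmu$, the variable $j = \fmu - n$ has the geometric law $(1-\gamma)\gamma^j$ on $j \ge 0$. Its mean is $\sum_j j(1-\gamma)\gamma^j = \gamma/(1-\gamma)$, obtained by differentiating $\sum_j \gamma^j = 1/(1-\gamma)$, which gives $\sum_j j\gamma^j = \gamma/(1-\gamma)^2$. Hence the conditional mean of $n$ is $\fmu - \gamma/(1-\gamma)$. Symmetrically, with $j = n - \cmu$, the conditional mean on the upper side is $\cmu + \gamma/(1-\gamma)$.

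Finally, the law of total expectation combines the two terms and gives exactly the stated formula. All series converge absolutely because $0<\gamma<1$, so the rearrangements are justified.

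The only delicate point is bookkeeping rather than analysis. The split between $n\le\fmu$ and $n\ge\cmu$ must be exhaustive and disjoint, and it must agree with the partition-function derivation, which writes $i\le\mu$ versus $i>\mu$. The convention $\cmu=\fmu+1$ handles this: when $\mu$ is an integer we get $f=0$ and $c=1$, the lower side contains $n=\mu$, and the formula stays consistent.
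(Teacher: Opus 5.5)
Your proof is correct and is essentially the paper's argument: the paper also splits the sum into the two geometric tails $n \le \fmu$ and $n \ge \cmu$, evaluates them with $\sum_i \gamma^i = 1/(1-\gamma)$ and $\sum_i i\gamma^i = \gamma/(1-\gamma)^2$, and normalizes by $z = (\gamma^f+\gamma^c)/(1-\gamma)$. Your version writes the result as tail masses times conditional geometric means (law of total expectation), so the stated convex-combination form comes out directly instead of after the paper's final rearrangement.
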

\begin{proof}
\begin{align*}
\E_{\mu, \gamma} \rc{n} &= \sum_{n \in \mZ} p(n \mid \mu,\gamma) \rc{n} = \kc{\frac{1}{z}}\gamma^f \sum_{i=0}^{\infty} \gamma^{i}\rc{(\fmu - i)}+ \kc{\frac{1}{z}}\gamma^c\sum_{i=0}^{\infty} \gamma^{i}\rc{(\cmu + i)}  \\ 
&= \gamma^f\fmu \kc{\frac{1}{z}}\bc{\sum_i \gamma^{i}}  - \gamma^f \kc{\frac{1}{z}}\gc{\sum_i\gamma^{i}i} + \gamma^c\cmu\kc{\frac{1}{z}}\bc{\sum_{i} \gamma^{i}}  + \gamma^c\kc{\frac{1}{z}}\gc{\sum_{i} \gamma^{i}i} \\
 \\
 &= (\gamma^f\fmu + \gamma^c\cmu)\kc{\frac{1}{z}}\bc{\frac{1}{1 - \gamma}} + (\gamma^c - \gamma^f) \kc{\frac{1}{z}}\gc{\frac{\gamma}{(1-\gamma)^2}} &&\gc{\text{note } \sum_i\alpha^ii = \alpha/(1-\alpha)^2} \\
 &= \kc{\frac{1}{z}}\frac{\gamma^f\fmu + \gamma^c\cmu + \gamma^c\frac{\gamma}{1-\gamma} - \gamma^f\frac{\gamma}{1-\gamma}}{1-\gamma} \\
 &= \kc{\frac{1-\gamma}{\gamma^f + \gamma^c}}\frac{\gamma^f(\fmu - \frac{\gamma}{1-\gamma})+ \gamma^c(\cmu + \frac{\gamma}{1-\gamma})}{1-\gamma} \\
  &= \frac{\gamma^f\left(\fmu - \frac{\gamma}{1-\gamma}\right)+ \gamma^c\left(\cmu + \frac{\gamma}{1-\gamma}\right)}{\gamma^f + \gamma^c} && \\
\end{align*} 
From which the result follows by re-arrangement. \iftoggle{tmlr}{}{\qed}
\end{proof}

\noindent This is a weighted sum between a point some distance $\gc{\frac{1}{1-\gamma}}$ below $\fmu$ and another point \gc{the same distance} above $\cmu$. The point $\fmu - \frac{\gamma}{1-\gamma}$ is the center of mass of the left tail, if $f=1$ and likewise for the other term and the right tail. The expectation is a convex combination of both points. 

The \bc{weights} are how much $\gamma^f$ and $\gamma^c$ respectively claim of the sum $\gamma^f + \gamma^c$. See Figure~\ref{fig:dalap-exp} for an illustration.

To understand the behavior of this function, first set $\mu$ equal to an integer value. Then, $f$ and $c$ go to $\gamma$ and $1$. If we then let $\gamma$ go to 0, we see that the weights go to 0 and 1 and only one of the terms (corresponding to $\mu$) remains. In this term, the \gc{distance} goes to zero with $\gamma$ and we are left with the value $\mu$.

If $\mu$ does not take an integer value, and we let $\gamma$ go to zero, we find 
\begin{equation} \label{eq:limit}
\lim_{\gamma \to 0}	\;\bc{\frac{\gamma^f}{\gamma^f+\gamma^c}} = \lim_{\gamma \to 0}\; \frac{1}{1 + \gamma^{c-f}} = \begin{cases}
1 & \text{if } c > f \\
1/2 & \text{if } c = f \\
0 & \text{if } c < f \\
\end{cases}
\end{equation}

and likewise with $\bc{f}$ and $\bc{c}$ reversed. This tells us that in the limit, the expectation becomes equal to the nearest integer to $\mu$ (that is, $r(\mu)$) unless $\mu$ is precisely between two integers, in which case the expectation is $\mu$.

This shows an inductive bias for integer values: as we move to absolute certainty ($\gamma \to 0$), the expectation takes integer values (except for a zero-measure set of values of $\mu$ which leads to half-integers).

\begin{figure}[tb]
    \centering
    \includegraphics[width=1\linewidth]{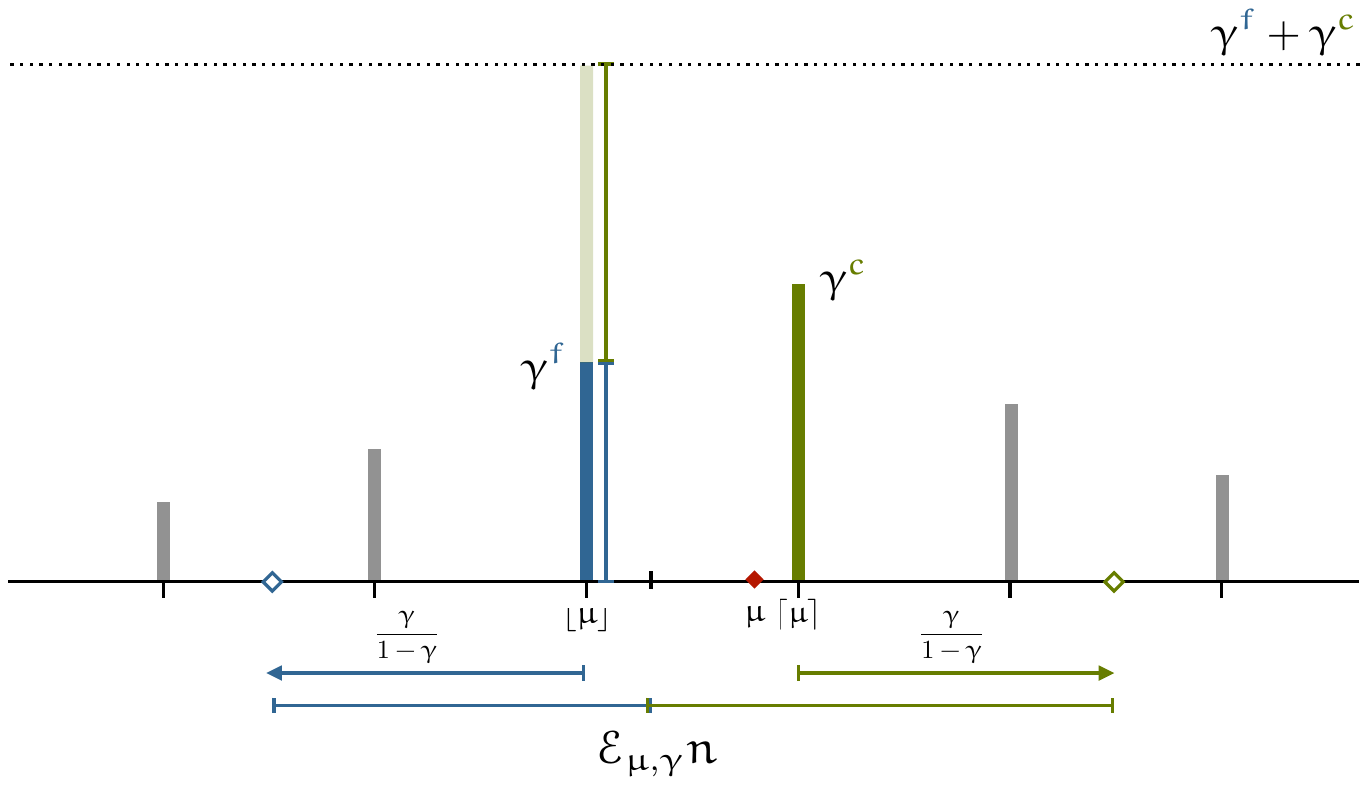}
    \caption{The expected value of unbounded Dalap. We take two numbers at a distance of $\frac{\gamma}{1-\gamma}$ below and above $\fmu$ and $\cmu$ respectively. The expected value is a weighted mean of these with weights proportional to $\gamma^\bc{f}$ and $\gamma^\gc{c}$.}
    \label{fig:dalap-exp}
\end{figure}

For the bounded versions of Dalap, we won't derive the full variance, but we will show that the value converges to $r(\mu)$ as $\gamma \to 0$ in the unbounded case, so long as the two neighboring integers are in the support.

\begin{prop}\label{prop:meantomu}
If $\fmu, \cmu \in S$ then $\E_{n \sim p(n \mid n \in S, \mu, \gamma)}n$ goes to $r(\mu)$ as $\gamma \to 0$ unless $c=f$ in which case it goes to $\mu$. 
\end{prop}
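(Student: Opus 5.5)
Since $S$ is an interval of integers (either $[l,\infty)$ or $[l,u]$), and it contains both $\fmu$ and $\cmu$, I will write the conditional expectation directly as a ratio of sums over $S$. Using the split from Section~\ref{section:dalap}, every $n\in S$ with $n\le\fmu$ has unnormalized mass $\gamma^f\gamma^{\fmu-n}$, and every $n\ge\cmu$ has mass $\gamma^c\gamma^{n-\cmu}$. So
\[
\E_{n\sim p(n\mid n\in S,\mu,\gamma)} n \;=\; \frac{\gamma^f\sum_{i=0}^{L}\gamma^i(\fmu-i) + \gamma^c\sum_{i=0}^{U}\gamma^i(\cmu+i)}{\gamma^f\sum_{i=0}^{L}\gamma^i + \gamma^c\sum_{i=0}^{U}\gamma^i},
\]
where $L=\fmu-l\ge 0$ and $U=u-\cmu\ge 0$, with $U=\infty$ in the one-way bounded case. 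The hypothesis $\fmu,\cmu\in S$ is exactly what guarantees that both sums contain their $i=0$ terms.

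Next I divide the numerator and the denominator by $\gamma^f+\gamma^c$. With the weights $w_f=\gamma^f/(\gamma^f+\gamma^c)$ and $w_c=\gamma^c/(\gamma^f+\gamma^c)$, the expression becomes
\[
\frac{w_f(\fmu + A(\gamma)) + w_c(\cmu + B(\gamma))}{w_f(1+C(\gamma)) + w_c(1+D(\gamma))},
\]
where $A,B,C,D$ collect the $i\ge1$ terms. Each of these is bounded in absolute value by a constant times $\sum_{i\ge1}\gamma^i i$ or $\sum_{i\ge1}\gamma^i$, so each is $O(\gamma)$ uniformly in $L$ and $U$. This holds even when $U=\infty$, because $\gamma<1$ makes the tails summable by the identities used in Proposition~\ref{prop:dalap-exp}. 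The weights satisfy $w_f+w_c=1$, so the denominator lies between $1$ and $1+O(\gamma)$ and tends to $1$. The numerator equals $w_f\fmu+w_c\cmu+O(\gamma)$.

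Finally I use the limit (\ref{eq:limit}) on the weights. If $c>f$, then $w_f\to1$ and the expectation tends to $\fmu=r(\mu)$, since $f<1/2$. If $c<f$, then $w_c\to1$ and the limit is $\cmu=r(\mu)$. If $c=f$, both weights are $1/2$ and the limit is $(\fmu+\cmu)/2=\mu$. When $\mu$ is an integer, we have $f=0$ and $c=1$ by the footnote convention $\cmu=\fmu+1$, so this falls in the first case and gives $\mu=r(\mu)$.

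The main obstacle is controlling the tail contributions uniformly when one side is infinite or the bounds are far away. I would handle this by bounding the truncated sums by the full geometric series, $\sum_{i\ge1}\gamma^i i=\gamma/(1-\gamma)^2\to0$. The rest of the argument is bookkeeping.
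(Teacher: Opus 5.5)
Your proof is correct and follows essentially the same route as the paper: split the support at $\fmu$ and $\cmu$, show that everything except these two neighbours contributes vanishing mass, and finish with the weight limits in (\ref{eq:limit}). Your version is more careful in three places. The uniform $O(\gamma)$ bound on the tails, which covers the factor $n$ and the case $U=\infty$, justifies what the paper does only by termwise limits. You use the actual truncated sums in the bounded partition function, where the paper uses the unbounded one. And the convention $\cmu=\fmu+1$ lets you absorb the integer case into $c>f$ instead of treating it separately.
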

\begin{proof}

If $\mu$ is an integer, the probabilities of $\mu -1$ and $\mu+1$ are $\gamma p(\mu)$ and all other integers have lower probability. As $\gamma\to 0$, these go to $0$, suggesting that all probability mass is assigned to $\mu$, proving the proposition. 

For the remainder, assume that $\mu$ is not an integer. We first write
$
\E n = \sum_{n \in S} \rc{p(n)} n \p 
$	

The \rc{probabilities} in this sum are of the form $\gamma^c\gamma^i / (\sum_j \gamma^f\gamma^j + \sum_j \gamma^c \gamma^j)$ for the right tail (with $c$ replaced by $f$ in the numerator for the left tail). Writing

\[
\lim_{\gamma \to 0} \frac{\gamma^c\gamma^i}{(\gamma^f + \gamma^c)\sum_j \gamma^j} = \lim_{\gamma \to 0} \bc{\gamma^i}\frac{\gamma^c}{\gamma^f+\gamma^c}\kc{(1-\gamma)}
\]

We note that \bc{the factor $\gamma^i$} causes the probability to go to zero, unless $i=0$. This is only the case for the two neighbors of $\mu$, for which the limit goes to $\rc{\frac{\gamma^c}{\gamma^f + \gamma^c}}$ for $\cmu$ and to $\rc{\frac{\gamma^f}{\gamma^f + \gamma^c}}$ for $\fmu$.

This gives us

\begin{align*}
\lim_{\gamma \to 0}  \E n = \rc{\frac{\gamma^f}{\gamma^f + \gamma^c}}\fmu + \rc{\frac{\gamma^c}{\gamma^f + \gamma^c}}\cmu \p
\end{align*}

As shown in (\ref{eq:limit}) if $c \neq f$, the probabilities go to $0$ and $1$ leaving only the term corresponding to $r(\mu)$. If $c=f$ the probabilities go to $\frac{1}{2}$, leaving us with the midpoint between the two integers which is (in this case) equal to $\mu$. \iftoggle{tmlr}{}{qed}
\end{proof}

\begin{prop}\label{prop:dalap-var}
The variance of Dalap goes to $0$ as $\gamma \to 0$, unless $c=f$ in which case it goes to $1/4$.
\end{prop}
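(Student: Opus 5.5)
We compute the variance through the second moment, $\mathrm{Var}\, n = \E n^2 - (\E n)^2$, and obtain $\lim_{\gamma\to 0}\E n^2$ by the same argument used in the proof of Proposition~\ref{prop:meantomu}. We treat the unbounded case explicitly; the bounded case follows in the same way as long as $\fmu,\cmu\in S$.

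\textbf{Integer $\mu$.} The mass on $\mu\pm k$ is $\gamma^k p(\mu)$, so
\[
\mathrm{Var}\, n \le \sum_{k\ge 1} 2k^2\gamma^k = O(\gamma),
\]
which tends to $0$.

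\textbf{Non-integer $\mu$.} We write
\[
\E n^2 = \frac{1-\gamma}{\gamma^f+\gamma^c}\Bigl(\gamma^f\sum_{i\ge 0}\gamma^i(\fmu-i)^2 + \gamma^c\sum_{i\ge 0}\gamma^i(\cmu+i)^2\Bigr).
\]
We expand $(\fmu-i)^2 = \fmu^2 - 2i\fmu + i^2$ and use the standard series $\sum_i\gamma^i = 1/(1-\gamma)$, $\sum_i i\gamma^i=\gamma/(1-\gamma)^2$ and $\sum_i i^2\gamma^i = \gamma(1+\gamma)/(1-\gamma)^3$. This gives a closed form in which every term involving $i\ge 1$ carries a factor $\gamma$. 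The weights $\gamma^f/(\gamma^f+\gamma^c)$ and $\gamma^c/(\gamma^f+\gamma^c)$ lie in $[0,1]$ and are therefore bounded, so these terms vanish as $\gamma\to 0$. Writing $w_f$ and $w_c$ for the weights, we get
\[
\E n^2 \to w_f\fmu^2 + w_c\cmu^2 .
\]
The same reasoning, already carried out in Proposition~\ref{prop:meantomu}, gives
\[
\E n \to w_f\fmu + w_c\cmu .
\]
Hence the variance tends to the variance of a two-point distribution on $\fmu,\cmu$ with weights $w_f,w_c$. Since $\cmu-\fmu=1$, this equals $w_f w_c$.

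\textbf{Limits of the weights.} By (\ref{eq:limit}), $w_f\to 1$ or $0$ when $c\ne f$, so $w_fw_c\to 0$. When $c=f$, $w_f=w_c=\tfrac12$ and $w_fw_c=\tfrac14$. This is exactly the statement.

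The main obstacle is a subtlety in the limits of the weights. The weights depend on $\gamma$, so the statement must be read with $\mu$ fixed. We also need to check that $w_fw_c$ really tends to $0$ when $c\neq f$ and not merely that $w_f$ converges. This holds because $\gamma^{|c-f|}\to 0$. Beyond that, the only point needing care is that each tail series, $\sum_i i^2\gamma^i$ and the others, is $O(\gamma)$ uniformly near $\gamma=0$; the closed forms make this immediate.
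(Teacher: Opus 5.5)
Your proof is correct and takes essentially the same route as the paper: you write the variance as $\E n^2 - (\E n)^2$, show that in the limit only the two neighbours $\fmu$ and $\cmu$ of $\mu$ keep any mass, and then apply the weight limits from (\ref{eq:limit}). Your additions tighten the argument rather than change it: the closed-form tail sums, and the bound $\mathrm{Var}\,n \le \E(n-\mu)^2$ for integer $\mu$, justify the interchange of limit and infinite sum that the paper only asserts, and the identity that a two-point variance on adjacent integers equals $w_f w_c$ covers both the $c\neq f$ and $c=f$ cases at once.
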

\begin{proof}Let $S \subseteq \mZ$ be our support. We write the variance as 
$$ \E_n n^2 - (\E_n n)^2 = \sum_{n \in S} \rc{p(n)}n^2 - (\E_n n)^2
$$

Assume first that $c \neq f$. 
As $\gamma \to 0$, the second term goes to $r(\mu)^2$, following Proposition~\ref{prop:meantomu}. If $\mu$ is an integer, we obtain $\mu^2 - r(\mu)^2 = 0$. 

If $\mu$ is non-integer, only the direct neighbors of $\mu$ have nonzero probability in the limit (as shown in the proof of Proposition~\ref{prop:meantomu}). For $c \neq f$, this leaves us with 

\begin{align*}
 \rc{p(\fmu)}\fmu^2 + \rc{p(\cmu)}\cmu^2 - (\E n)^2  &= \rc{\frac{\gamma^f}{\gamma^f + \gamma^c}} \fmu^2 + \rc{\frac{\gamma^c}{\gamma^f + \gamma^c}} \cmu^2  - r(\mu)^2 \\
 \end{align*}
 
Again, \rc{the probabilities} go to $1$ for the term corresponding to $r(\mu)$ and $0$ for the other, leaving us with $r(\mu)^2 -r(\mu)^2 = 0$.

If $c=f$, $(\E n)^2$ goes to $\mu^2$ and \rc{the probabilities} go to $1/2$, giving us

\begin{align*}
 \E_n n^2 - (\E_n n)^2  &= \rc{\frac{1}{2}} \fmu^2 + \rc{\frac{1}{2}} \cmu^2  - \mu^2 \\
&=  \frac{\fmu^2 +\cmu^2}{2}  -  \left(\frac{\fmu + \cmu}{2}\right)^2 = \frac{1}{4}
 \end{align*}
Where the last step is derived by substituting $\cmu = \fmu + 1$ and multiplying out all squares.  \iftoggle{tmlr}{}{qed}
\end{proof}

This shows that the variance may be made arbitrarily small except for a negligible set of values of $\mu$, for which the variance is nevertheless a small non-zero value. 

\subsection{Discrete analogue of the normal distribution (\texttt{danorm})}
\label{section:danorm}

The main feature of the Laplace distribution is the exponential decay of its tails. Likewise, a defining feature of the normal distribution is the \emph{squared}-exponential decay of its tails.
Following this logic, we may ask whether we can take the exponential decay of Dalap and turn it into a squared exponential decay to obtain a discrete analogue for the normal distribution. This gives us the distribution
\[
p(n \mid \mu, \gamma) \propto \gamma ^{(n - \mu)^2} \p
\]

If we set $\gamma = e^{-\frac{1}{2s}}$---with $s$ some alternative dispersion parameter analogous to $\sigma^2$---we obtain the more familiar functional form of the normal distribution. Additionally, we can choose our parameters so that this distribution corresponds to the discrete analog of the normal distribution introduced in \cite{kemp1997characterizations}. Like the normal distribution on the real values, this is the maximum entropy distribution on the integers with given expectation and variance. 

Assuming some partition function $\bc{z}^{-1}$ and taking the negative log probability, we get
\[
-\log p(n \mid \gamma, \mu) = \gc{(n - \mu)^2} \log\frac{1}{\gamma} + \log \bc{z} \p 
\]

This shows that, as it does for the normal distribution, our loss becomes a simple tempered version of \gc{the squared distance between our location parameter $\mu$ and the target value $n$}. The tempering effect of $\log\frac{1}{\gamma}$ and $\log \bc{z}$ is similar to that in Dalap, due to the similar functional forms. 

This suggests that this distribution, which we'll call \emph{Danorm}, is a natural choice for settings where the squared distance is what we are most interested in minimizing, but we still require a discrete distribution on the integers.

The downside of Danorm compared to Dalap is that the partition function has no closed form solution, because unlike the geometric tails of Dalap, there is no elementary closed-form for $\sum_i \gamma^{(i^2)}$.

Instead, we rely on the fast decay of the tails to compute $\bc{z}$ numerically. For some parameter $n$ (500 in all experiments), we compute
\[
\bc{\tilde z} = \sum_{i\in [\fmu - n, \cmu + n]} \gamma^{(i- \mu)^2} 
\] 
and substitute this for $\bc{z}$. To ensure fast convergence in the tails, $\gamma$ can be activated to some maximal value slightly below 1.0.

For bounded versions of Danorm, we simply set out-of-bounds values to zero in this sum.\footnotemark

\footnotetext{To stabilize the implementation, we actually compute $\log \bc{\tilde z}$ by computing the log-values of the terms of the sum and summing with the log-sum-exp trick. Out-of-bounds values are set to $-\infty$. See Appendix \ref{app:stability} for further details.}

This means that the time and memory complexity of computing the loss function increases by a factor of $2n$ compared to Dalap. In tabular settings, where we compute one probability per instance, this is unlikely to be a bottleneck. However, for something like high-resolution image generation, the overhead may be substantial. In such cases the maximum value of $\gamma$ should be set low, so that $n$ can be safely set to low values.

\subsubsection{Mean and variance}

Due to the lack of a closed form expression for the tails, we cannot provide a similar result to Proposition~\ref{prop:dalap-exp}. However, we can show that the expectation converges to $r(\mu)$ as $\gamma \to 0$ and that the variance converges to $0$ (with the same exceptions as Dalap). Let $c$ and $f$ be defined as they were for Dalap.

\begin{prop}\label{prop:danorm-exp}
If $\fmu, \cmu \in S$ then $\E_{n \sim p(n \mid n \in S, \mu, \gamma)}n$ goes to $r(\mu)$ as $\gamma \to 0$ unless $c=f$ in which case it goes to $\mu$. 
\end{prop}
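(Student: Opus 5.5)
We follow the structure of the proof of Proposition~\ref{prop:meantomu}, replacing the geometric tails of Dalap by the squared-exponential tails of Danorm. The plan is to show that, as $\gamma \to 0$, all probability mass concentrates on $\fmu$ and $\cmu$ with limiting weights $\frac{\gamma^{f^2}}{\gamma^{f^2}+\gamma^{c^2}}$ and $\frac{\gamma^{c^2}}{\gamma^{f^2}+\gamma^{c^2}}$. We then read off the limit of the expectation as in (\ref{eq:limit}).

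\textbf{Integer case.} If $\mu$ is an integer (so $f=0$, $c=1$ by the convention $\cmu=\fmu+1$), every $n \neq \mu$ satisfies $p(n)/p(\mu) = \gamma^{(n-\mu)^2} \le \gamma$. For $\gamma\le 1/2$,
\[
\sum_{n \in S, n\neq\mu} |n|\,\gamma^{(n-\mu)^2} \le \sum_{k\ge 1} (|\mu|+k)\,2\gamma^{k^2} \le 2\gamma\sum_{k\ge1}(|\mu|+k)\gamma^{k-1} = O(\gamma),
\]
using $k^2\ge k$. Since $p(\mu)\le 1$, the first-moment contribution from $n\neq\mu$ vanishes, $p(\mu)\to 1$, and hence $\E n \to \mu = r(\mu)$.

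\textbf{Non-integer case.} Write each $n\in S$ as $\fmu - i$ or $\cmu + i$ with $i\ge 0$. The exponents are then $(f+i)^2$ and $(c+i)^2$. The key inequality is $(f+i)^2 \ge f^2 + i$ for $i \ge 1$, and likewise for $c$, since $(f+i)^2 - f^2 = 2fi + i^2 \ge i$. This gives
\[
\gamma^{(f+i)^2} \le \gamma^{f^2}\gamma^{i}.
\]
So the unnormalized tail mass is bounded exactly like the Dalap tails. The partition function satisfies $\tilde z \ge \gamma^{f^2}+\gamma^{c^2}$, because both neighbors lie in $S$ by hypothesis. Therefore the normalized probability of $\fmu - i$ is at most $\gamma^i$, and similarly on the right. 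The same weighted-series argument as in the integer case then shows that the contribution of all $i\ge1$ to both $\sum p(n)$ and $\sum p(n)\,n$ is $O(\gamma)$, uniformly in $S$, with the finite constant depending on $|\mu|$. Consequently
\[
\E n - \frac{\gamma^{f^2}\fmu + \gamma^{c^2}\cmu}{\gamma^{f^2}+\gamma^{c^2}} \to 0 .
\]

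\textbf{Evaluating the limit.} Apply (\ref{eq:limit}) with exponent $c^2-f^2$ in place of $c-f$. Since $f,c\ge 0$, $c^2>f^2$ exactly when $c>f$. So the weights tend to $1$ and $0$ for the neighbor nearest $\mu$, giving $r(\mu)$. When $c=f=\tfrac12$ the weights are both exactly $\tfrac12$, and the limit is the midpoint $\fmu+\tfrac12=\mu$.

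\textbf{Main obstacle.} The main difficulty is making the tail control rigorous rather than the termwise argument used for Proposition~\ref{prop:meantomu}. We need uniform domination of the infinite first-moment sum, which the bound $\gamma^{(f+i)^2}\le\gamma^{f^2}\gamma^i$ provides. With that bound in hand, the remaining computations reduce to the geometric series already used for Dalap. The numerical truncation $\tilde z$ is harmless here, since truncation only removes terms that are already bounded by the same inequalities.
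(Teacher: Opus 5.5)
Your proof is correct and follows the same skeleton as the paper's: the integer case by inspection, then splitting $S$ into offsets $i$ from $\fmu$ and $\cmu$, discarding $i\ge1$ via the factorization $\gamma^{(c+i)^2}=\gamma^{c^2}\gamma^{2ci+i^2}$, and applying (\ref{eq:limit}) with exponent $c^2-f^2$. The one real difference is that you normalize by $\gamma^{f^2}+\gamma^{c^2}$ before letting $\gamma\to0$, which gives the uniform bound $p(\fmu-i),\,p(\cmu+i)\le\gamma^i$; this justifies exchanging the limit with the infinite sums and avoids the paper's ratio-of-limits step, which is a $0/0$ form for non-integer $\mu$ because every unnormalized term $\gamma^{(n-\mu)^2}$ tends to $0$.
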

\begin{proof} If $\mu$ is an integer the result follows by inspection in the same manner as it did for Dalap. 

If $\mu$ is non-integer, we first write $\E n = \sum_{n \in S} \rc{p(n)} n$. As with Dalap, we would like to show that only the direct neighbors of $\mu$ get nonzero \rc{probability} as $\gamma \to 0$. 

Let $l$ and $r$ be the number of steps the support extends to the left and right of the neighbors of $\mu$ (with $\infty$ a possibility).

\begin{align*}
\lim_{\gamma \to 0} \E n &= \lim \sum_n \frac{1}{\bc{z}} \gamma^{(n - \mu)^2} n 
= \frac{\lim \sum_n n \gamma^{(n - \mu)^2} }{\lim \bc{\sum_n \gamma^{(n-\mu)^2}} }
= \frac{ \sum_n n \lim \gamma^{(n - \mu)^2} }{\sum_n \lim \gamma^{(n-\mu)^2}} \\
&= \frac{
    \sum_{i=0}^l (i +\fmu) \gc{\lim \gamma^{(i+f)^2}} + \sum_{i=0}^r (i +\cmu)\gc{ \lim \gamma^{(i+c)^2}} 
}{
     \sum_{i=0}^l \gc{\lim \gamma^{(i+f)^2}} + \sum_{i=0}^r \gc{\lim \gamma^{(i+c)^2}}
} \p
\end{align*}

Now, for any of the limits in green, we can write $\gamma^{(i+c)^2} = \gamma^{ii}\gamma^{2ci}\gamma^{cc}$, which gives us for the limit
\[
0 \leq \lim \gamma^{ii}\gamma^{2ci}\gamma^{cc} \leq \lim \gamma\gamma^{2c}\gamma^{cc} = 0
\]
for any $i \geq 1$, and likewise for the ones containing $f$.

Eliminating these terms, we are left with
\[
\frac{\fmu \lim \kc{\gamma^0\gamma^{2f0}}\gamma^{ff} + \cmu \lim \kc{\gamma^0\gamma^{2c0}}\gamma^{cc}}{\lim \kc{\gamma^0\gamma^{2f0}}\gamma^{ff} + \lim \kc{\gamma^0\gamma^{2c0}}\gamma^{cc} } = \lim \rc{\frac{\gamma^{ff}}{\gamma^{ff} + \gamma^{cc}}} \fmu + \rc{\frac{\gamma^{cc}}{\gamma^{ff} + \gamma^{cc}}} \cmu \p
\]

Rewriting \rc{the probabilities} as $\rc{\frac{\gamma^{f^2}}{ \gamma^{f^2} + \gamma^{c^2} }} = \frac{1}{1+ \gamma^{c^2 - f^2}}$ we see that the limit for $\gamma \to 0$ is the same as in (\ref{eq:limit}), if $c > f$ it goes to 0, if $c=f$ to $\frac{1}{2}$ and if $c < f$ to $1$, resulting in the same expectation in the limit as we found for Dalap. \iftoggle{tmlr}{}{\qed}
\end{proof}

\begin{prop}\label{prop:danorm-var}
The variance of Danorm goes to $0$ as $\gamma \to 0$, unless $c=f$ in which case it goes to $1/4$.
\end{prop}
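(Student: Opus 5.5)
The plan is to mirror the proof of Proposition~\ref{prop:dalap-var}, with the limit computation from the proof of Proposition~\ref{prop:danorm-exp} as the main tool. Let $S$ be the support, assumed to contain $\fmu$ and $\cmu$. We write the variance as $\E n^2 - (\E n)^2$. By Proposition~\ref{prop:danorm-exp}, $(\E n)^2$ tends to $r(\mu)^2$ when $c \neq f$ and to $\mu^2$ when $c = f$. So it suffices to compute $\lim_{\gamma\to 0}\E n^2$.

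If $\mu$ is an integer, $p(\mu)\propto \gamma^0=1$ while every other $n$ has weight at most $\gamma$. All mass then concentrates on $\mu$, so $\E n^2 \to \mu^2$ and the variance tends to $0$. Note that $c=f$ cannot occur in this case, since the footnote convention $\cmu=\fmu+1$ gives $f=0$, $c=1$.

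For non-integer $\mu$, I would repeat the computation of the proof of Proposition~\ref{prop:danorm-exp} with $n$ replaced by $n^2$. Write
\[
\E n^2 = \frac{\sum_{i=0}^l (\fmu - i)^2\gamma^{(i+f)^2} + \sum_{i=0}^r (\cmu+i)^2\gamma^{(i+c)^2}}{\sum_{i=0}^l \gamma^{(i+f)^2} + \sum_{i=0}^r \gamma^{(i+c)^2}} .
\]
Divide numerator and denominator by $\gamma^{f^2}+\gamma^{c^2}$. Each term with $i\ge 1$ then carries a factor $\gamma^{i^2+2ci}/(1+\gamma^{f^2-c^2})\le \gamma^{i^2+2ci}$ (and the analogous bound with $f$), so it vanishes as $\gamma\to 0$. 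Only the two neighbours survive, giving
\[
\lim \E n^2 = \lim \frac{\gamma^{f^2}}{\gamma^{f^2}+\gamma^{c^2}}\fmu^2 + \frac{\gamma^{c^2}}{\gamma^{f^2}+\gamma^{c^2}}\cmu^2 .
\]
By the same case analysis as in (\ref{eq:limit}), these weights tend to $1$ and $0$ on the nearest integer when $c\ne f$, so $\E n^2\to r(\mu)^2$ and the variance tends to $0$. When $c=f$ both weights tend to $1/2$, and the final algebra of Proposition~\ref{prop:dalap-var} gives $\frac{\fmu^2+\cmu^2}{2}-\bigl(\frac{\fmu+\cmu}{2}\bigr)^2=\frac14$.

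The main obstacle is justifying that the limit passes through the sums when the support is infinite ($l$ or $r=\infty$), because the weights $n^2$ grow. I would use dominated convergence on $\gamma\in(0,\tfrac12]$. After the normalisation above, the $i$-th term is bounded by $(|\fmu|+i)^2\gamma^{i^2}\le (|\fmu|+i)^2 2^{-i^2}$, which is summable. The same bound also justifies the interchange that the proof of Proposition~\ref{prop:danorm-exp} used implicitly.
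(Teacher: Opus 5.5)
Your proof is correct and follows the same route as the paper. Like the paper, you write the variance as $\E n^2 - (\E n)^2$, take the limit of $(\E n)^2$ from Proposition~\ref{prop:danorm-exp}, keep only the two neighbours of $\mu$ in $\E n^2$, and reuse the weight limits and the $\frac14$ algebra from Proposition~\ref{prop:dalap-var}. Your normalisation by $\gamma^{f^2}+\gamma^{c^2}$ and the dominated-convergence bound add rigour the paper's argument leaves implicit. The paper's step, borrowed from Proposition~\ref{prop:danorm-exp}, moves limits through possibly infinite sums without justification, and for non-integer $\mu$ it formally takes a quotient of two limits that are both $0$.
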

\begin{proof}
We follow the proof for Proposition~\ref{prop:dalap-var}. In the sum, we use the logic from Proposition~\ref{prop:danorm-exp} to isolate only the terms for $\fmu$ and $\cmu$, resulting in 

\begin{align*}
\lim_{\gamma \to 0} \E_n n^2 - (\E_n n)^2 &= \rc{\frac{\gamma^{f^2}}{\gamma^{f^2} + \gamma^{c^2}}} \fmu^2 + \rc{\frac{\gamma^{c^2}}{\gamma^{f^2} + \gamma^{c^2}}} \cmu^2  - r(\mu)^2 \\
 \end{align*}
 
 As established, the limits of these \rc{probabilities} are the same as their equivalents in the proof of Proposition~\ref{prop:dalap-var}, so the same result follows. \iftoggle{tmlr}{}{\qed}
\end{proof}

%% file: sections/bitwise.tex
\subsection{Bitwise}

The \emph{Bitwise} distribution is based on the common practice in neural network modeling of representing integers by their binary representation. While this approach has been used for a long time---for integers or other large discrete output spaces  \cite{dietterich1995solving,terrence1986nettalk}---we believe this is the first \emph{interpretation} of this approach as probability distribution on the integers.

First, let $\mB^k$ represent all bitstrings $\x$ of length $k$. We then define the function $d : \mB^k \to \mZ$ as 
$$
d(\x) = (2x_1 - 1) \left ( x_2 2^0 + x_3 2^1 + \ldots + x_k 2^{k-2}\right) \p 
$$
That is, $d(\x) = n$ decodes a bitstring to an integer using signed-magnitude representation. We use $e(n) = \x$ for the inverse operation of \emph{encoding} an integer into a bitstring.

Next, we define a parameter vector $\bpi$ with $k$ real-valued elements $\pi_i \in [0, 1]$. We define our distribution on $\mZ$ first from a sampling perspective. For a distribution with parameters $\bpi$ we sample from the corresponding Bitwise distribution, by treating each $\pi_i$ as a Bernoulli distribution over the values $0$ and $1$. Sampling from each independently yields a bitstring $\x$, which we decode as $d(\x) = n$ to obtain the integer $n$.

The corresponding probability mass function $q(n \mid \bpi)$, by independence is defined simply as
\begin{align*}
q(n \mid \bpi) &= \prod_{i=1}^k\text{Bern}(x_i \mid \pi_i) & \text{with } x = e(n) \\
&= \prod_i \pi_i^{x_i}(1-\pi_i)^{1-x_i} \;\p 
\end{align*}
We must, however, make one exception for $n=0$, which has two possible encodings under $e$---one for $-0$ and one for $+0$. To keep our notation simple, we assume that $q$ is supported by the set of integers with a signed zero. We can then define a distribution on $\mZ$ as
$$
p(n\mid \bpi) = \begin{cases} q(+0) + q(-0)& \text{if }n=0 
\\ q(n) &\text{otherwise.}\end{cases}
$$

In practice, we always encode $0$ as $+0$. This means that a small amount of probability mass is always expended on unseen events.\footnotemark

\footnotetext{This could be avoided by using a two's complement encoding instead. However, for the upstream model we expect that sign-magnitude representation offers a better separation of concerns, with the magnitude bits representing a magnitude in the same way for both negative and positive integers.}

\subsubsection{Nonnegative support}

Bitwise always has a bounded support of $[-2^{k-1}+1, 2^{k-1}-1]$. For a version only supported on $[0,1 , 2, \ldots)$, we remove the sign from the representation, using 
$$
d(\x) = x_1 2^0 + x_2 2^1 + \ldots + x_k 2^{k-1} \;\p 
$$

resulting in a support of $[0, 2^k-1]$. We refer to the first version of Bitwise as \emph{unbounded} and the second as \emph{nonnegative}.

While a support on all of $\mZ$ or $\mN$ is impossible, the size of the support grows exponentially in $k$, which means that we can usually set $k$ large enough to deal with any integers $n$ we are likely to encounter. We therefore consider bitwise unbounded for practical purposes.\footnotemark

For a two-way bounded version of Bitwise, we can simply choose the smallest interval that takes the form $[-2^{k-1}+1, 2^{k-1}-1]$ or $[0, 2^k-1]$ which covers the required interval. Unlike Dalap and Danorm, Bitwise does not allow the support to perfectly coincide with all possible bounds.

\footnotetext{Note that in settings where this is not the case, we would already be unable to represent our data in computer memory using standard integer representations.} 

\subsubsection{Mean and variance}

As with Dalap, we show a brief analysis of the mean and variance, to show that the mean is easily computable from the parameters and that the variance can be made arbitrarily small. 
Let $\text{round}(\boldsymbol{x})$ be the function that rounds each element of the vector $\boldsymbol{x}$ to its nearest integer.

Let $\pi_i^\ndot = 1 - \pi_i$ and $\pi_i^{\x} = \begin{cases} \pi_i &\text{if } x_i = 1 \\ \pi_i^\ndot &\text{if } x_i = 0\end{cases}$ and let $\bpi^{\x} = \pi_1^{\x} \times \ldots \times \pi_k^{\x}$. Let $\bpi_{a:b}$ represent the subvector of $\bpi$ from $a$ to $b$ inclusive. If $a$ or $b$ is omitted the subvector is taken from the start or end of $\bpi$ respectively.

\begin{prop}
The mean of a nonnegative Bitwise distribution with parameters $\bpi$ is 
\[
\pi_1 2^0 + \ldots + \pi_k 2^{k-1} \p 
\]\label{prop:bw-nonnegative-mean}
\end{prop}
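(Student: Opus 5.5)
The plan is to use linearity of expectation directly. Under the nonnegative Bitwise distribution, a sample is produced by drawing independent bits $x_i \sim \text{Bern}(\pi_i)$ and decoding with $d(\x) = x_1 2^0 + \ldots + x_k 2^{k-1}$. There is no signed zero in the nonnegative version, so $d$ is a bijection between $\mB^k$ and $[0, 2^k-1]$. The distribution $p(n \mid \bpi)$ is therefore exactly the pushforward of the product Bernoulli measure under $d$. As a result,
\[
\E_{n \sim p(\cdot \mid \bpi)} n = \E_{\x} d(\x) = \sum_{i=1}^k 2^{i-1}\, \E x_i ,
\]
and since $\E x_i = \pi_i$ for a Bernoulli variable, this gives the stated formula.

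To stay consistent with the notation just introduced ($\bpi^{\x}$ and subvectors $\bpi_{a:b}$), I would also give a more elementary version that avoids invoking linearity over the product measure. This version is an induction on $k$. Write
\[
\E n = \sum_{\x \in \mB^k} \bpi^{\x} d(\x).
\]
Then split on the last bit $x_k$, using $d(\x) = d(\x_{:k-1}) + x_k 2^{k-1}$ and $\bpi^{\x} = \bpi_{:k-1}^{\x_{:k-1}}\, \pi_k^{\x}$. This yields
\[
\E n = (\pi_k^\ndot + \pi_k)\sum_{\x' \in \mB^{k-1}} \bpi_{:k-1}^{\x'} d(\x') + \pi_k 2^{k-1} \sum_{\x'} \bpi_{:k-1}^{\x'} .
\]
In the first term, $\pi_k^\ndot + \pi_k = 1$, and the sum is the mean for $\bpi_{:k-1}$, which is handled by the induction hypothesis. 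In the second term, the sum equals $1$ because it is the total mass of a product of Bernoullis. That this total mass is $1$ is itself a trivial induction by the same split. The base case $k=1$ gives $\pi_1^\ndot \cdot 0 + \pi_1 \cdot 1 = \pi_1$.

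No step is really an obstacle. The only point needing care is the bookkeeping fact that $d$ is injective in the nonnegative case, so that summing over bitstrings is the same as summing over integers. This is where the nonnegative version differs from the signed one: in the signed version, $+0$ and $-0$ both decode to $0$, although even there the mean computation would be unaffected since both contribute $0$.
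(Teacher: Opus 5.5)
Your proof is correct. The inductive version you sketch is essentially the paper's proof. The paper also writes $\E n=\sum_{\x\in\mB^k}\bpi^{\x}d(\x)$, splits on $x_k$, uses $\pi_k+\pi_k^\ndot=1$, and applies the hypothesis to $\bpi_{:k-1}$. The one step the paper leaves implicit is that the $x_k=1$ branch contributes exactly $\pi_k 2^{k-1}$. That requires $\sum_{\x'\in\mB^{k-1}}\bpi_{:k-1}^{\x'}=1$, and you state this normalization explicitly.

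Your primary route differs from the paper's. You apply linearity of expectation to $d(\x)=\sum_i x_i 2^{i-1}$ with $\E x_i=\pi_i$, which the paper's induction effectively re-derives by hand for this particular sum. Your route is shorter. It also shows that independence of the bits plays no role: only the marginals $\E x_i=\pi_i$ matter, so the formula holds for any joint law on the bits with those marginals. The paper's induction instead relies on the factorization $\bpi^{\x}=\bpi_{:k-1}^{\x'}\pi_k^{x_k}$, but it stays entirely within the paper's $\bpi^{\x}$ notation.

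One minor point: injectivity of $d$ is not actually needed for the mean. The distribution is defined as the law of $d(\x)$, so $\E_n n=\E_{\x}d(\x)$ holds for any pushforward. Your own remark about $\pm 0$ in the signed case already points to this.
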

\begin{proof} We proceed by induction. For the base case where $k=1$, the expected value is $\pi_1 1 + \pi_1^\ndot 0 = \pi_1 2^0$. For the inductive case:
\begin{align*}
\E_{n \sim p(x \mid \bpi)} n &= \sum_{n\in [0, \infty)} p(n \mid \bpi) n = \sum_{\x \in \mB^k} \bpi^{\x} d(\x)\\ 
&= \sum_{\x} \pi^{\x} (x_12^0 + \ldots + x_k2^{k-1}) = \sum_{\x, i} \pi^{\x} x_i2^{i-1} \\
&= \sum_{i,\x\in \mB^k_{k=1}} \bpi^{\x} x_i2^{i-1} + \sum_{i,\x \in \mB^k_{k=0}} \bpi^{\x} x_i2^{i-1} & \bc{ \mB^k_{i=v} = \{\x \in \mB^k : \x_i = v \} }\\
&=  \pi_k\cdot 1 \cdot 2^{k-1} + \pi_k\sum_{i\in [1,k-1],\x\in \mB^{k-1}} \bpi_{:k-1}^{\x} x_i2^{i-1} \\ 
&\;\;\;\;\; + \kc{\pi_k^\ndot \cdot 0 \cdot 2^{k-1}} + \pi_k^\ndot\sum_{i\in [1,k-1],\x \in \mB^{k-1}} \bpi^{\x}_{:k-1} x_i2^{i-1} \\
&=  \pi_k 2^{k-1} + \rc{(\pi_k + \pi_k^\ndot)} \sum_{i\in [1,k-1],\x\in \mB^{k-1}} \bpi_{:k-1}^{\x} x_i2^{i-1}&\rc{\text{note $\pi_k + \pi^\ndot_k = 1$}} \\
&= \pi_k 2^{k-1 } + \E_{n \sim p(x \mid \bpi_{:k-1})} n \p 
\end{align*}
By the inductive hypothesis, $ \E_{n \sim p(x \mid \bpi_{:k-1})} n$ is equal to the required remainder of the sum.
\iftoggle{tmlr}{}{\qed}
\end{proof}

\noindent Note two useful edge cases. In the case that all $\pi_i = \frac{1}{2}$ we obtain a uniform distribution over $\mB^k$ and thus over $[0, 2^k-1]$. In the case that $\pi = \x$ we obtain a deterministic distribution, where $p(d(\x)) = 1$.

We next analyze the unbounded case. For ease of notation, we indicate the parameter for the sign bit with $\pi_\text{pos}$ and include only the remaining  parameters in the vector $\bpi$ (and likewise for $\x$). Let $k$ be the length of $\bpi$ thus defined.

\begin{prop}
For an unbounded Bitwise distribution with parameters $\pi_\text{pos}, \bpi$, the expected value is
\[
(\pi_\text{pos} - \pi_\text{pos}^\ndot)(\pi_1 2^0 + \ldots \pi_k 2^{k-1}) \p 
\]
\end{prop}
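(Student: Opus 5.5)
The plan is to reduce the statement to Proposition~\ref{prop:bw-nonnegative-mean} by using the independence of the sign bit from the magnitude bits. Under the sampling definition, a draw from the unbounded Bitwise distribution is $n = d(x_\text{pos}, \x) = (2x_\text{pos} - 1)\, m(\x)$. Here $x_\text{pos} \sim \text{Bern}(\pi_\text{pos})$ and $m(\x) = x_1 2^0 + \ldots + x_k 2^{k-1}$ is the nonnegative decoding of the magnitude bits. The sign bit and the magnitude bits are sampled independently.

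First I will dispose of the signed zero. The distribution $p$ on $\mZ$ merges $q(+0)$ and $q(-0)$ into one mass at $0$, and both contribute $0$ to the expectation. So $\E_{p}\, n = \sum_{x_\text{pos}, \x} q(x_\text{pos}, \x)\, d(x_\text{pos}, \x)$, and the expectation can be computed over the bitstring distribution $q$ directly. (If one insists on always encoding $0$ as $+0$, the sampling view still decodes both encodings to the value $0$, so nothing changes.)

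Second, I will factor the expectation. Because $q$ is a product of Bernoulli masses, we have $q(x_\text{pos}, \x) = \pi_\text{pos}^{x_\text{pos}} (\pi_\text{pos}^\ndot)^{1-x_\text{pos}} \bpi^{\x}$. Splitting the sum over $x_\text{pos} \in \{0,1\}$ gives
\[
\E\, n = \pi_\text{pos} \sum_{\x} \bpi^{\x} m(\x) - \pi_\text{pos}^\ndot \sum_{\x} \bpi^{\x} m(\x) = (\pi_\text{pos} - \pi_\text{pos}^\ndot)\, \E[m(\x)] ,
\]
which is just $\E[(2x_\text{pos}-1)\, m] = \E[2x_\text{pos}-1]\, \E[m]$ with $\E[2x_\text{pos}-1] = \pi_\text{pos} - \pi_\text{pos}^\ndot$.

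Third, I will apply Proposition~\ref{prop:bw-nonnegative-mean} to the magnitude bits. Their distribution is exactly the nonnegative Bitwise distribution with parameters $\bpi$, so $\E[m] = \pi_1 2^0 + \ldots + \pi_k 2^{k-1}$, which finishes the proof.

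The only step needing care is the first: checking that folding the two zero encodings into one does not change the expectation. Since zero contributes nothing to the sum either way, this is a short remark rather than a real obstacle.
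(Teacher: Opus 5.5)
Your proposal is correct and follows essentially the same route as the paper: write the expectation as a sum over bitstrings, split on the sign bit to factor out $(\pi_\text{pos} - \pi_\text{pos}^\ndot)$, and substitute the nonnegative mean from Proposition~\ref{prop:bw-nonnegative-mean}. Your explicit remark that the two encodings of zero contribute nothing to the expectation makes precise a step the paper leaves implicit when it equates the sum over $\mZ$ with the sum over bitstrings.
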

\begin{proof}
\begin{align*}
\E_{n \sim p(x \mid \bpi)} n &= \sum_{n\in (-\infty, \infty)} p(n \mid \bpi) n = \sum_{\x \in \mB^k} \pi_\text{pos}^{\x}\bpi^{\x} d(\x)\\
&= \sum_{\x} \pi_\text{pos}^\x \bpi^{\x} (2x_\text{pos} - 1)(x_12^0 + \ldots + x_k2^{k-1}) \\
&= \sum_{\x} \pi_\text{pos}^\x \bpi^{\x}\rc{(2x_\text{pos}-1)} \sum_i x_i2^{i-1} \\
&= \pi_\text{pos}\cdot \rc{1}\cdot \sum_{i, \x \in \mB^k} \bpi^\x x_i2^{i-1} + \pi_\text{pos}^\ndot \cdot \rc{-1}\cdot \sum_{i, \x \in \mB^k} \bpi^\x x_i2^{i-1} \\
&= (\pi_\text{pos} - \pi_\text{pos}^\ndot) \sum_{i, \x} \bpi^\x x_i2^{i-1} = (\pi_\text{pos} - \pi_\text{pos}^\ndot) \E_{n \sim p(x \mid \bpi, n \geq 0)} n \\
\end{align*}
Substituting the result from Proposition \ref{prop:bw-nonnegative-mean} completes the proof. 
\end{proof}

\noindent Next, we show that the variance has a predictable form, which can be made arbitrarily small around an arbitrary value in the support.

\begin{prop}
The variance of both the unbounded and the non-negative Bitwise distribution goes to $0$ as $\bpi \to \bar\x$ for any $\bar \x$.	
\end{prop}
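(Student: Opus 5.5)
The approach is to compute the variance in closed form, using the independence of the bits, and then read off the limit. In the nonnegative case, $n = d(\x) = \sum_{i=1}^k x_i 2^{i-1}$ is a weighted sum of independent Bernoulli variables $x_i \sim \text{Bern}(\pi_i)$. The variance of a sum of independent variables is the sum of their variances, so
\[
\text{Var}(n) = \sum_{i=1}^k 4^{i-1}\,\pi_i(1-\pi_i) = \sum_{i=1}^k 4^{i-1}\,\pi_i\pi_i^\ndot \p
\]
If one prefers to stay within the paper's inductive style, the same formula follows from an induction on $k$ that mirrors the proof of Proposition~\ref{prop:bw-nonnegative-mean}. Split on $x_k$ and apply the law of total variance. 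Conditioned on $x_k$, the remaining sum has the distribution with parameters $\bpi_{:k-1}$, whose variance does not depend on $x_k$. This yields
\[
\text{Var}_{\bpi}(n) = \text{Var}_{\bpi_{:k-1}}(n) + 4^{k-1}\pi_k\pi_k^\ndot \p
\]
In the limit $\bpi \to \bar\x$ with $\bar\x \in \mB^k$, every factor $\pi_i\pi_i^\ndot$ tends to $\bar x_i(1-\bar x_i) = 0$. The sum is finite and the weights $4^{i-1}$ are fixed, so the variance tends to $0$.

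For the unbounded case, write $n = s\cdot m$. Here $s = 2x_\text{pos}-1 \in \{-1,1\}$, and $m$ is a nonnegative Bitwise variable with parameters $\bpi$, independent of $s$. Since $s^2=1$, we have $\E n^2 = \E m^2$. The previous proposition gives $\E n = (\pi_\text{pos}-\pi_\text{pos}^\ndot)\E m$. Hence
\[
\text{Var}(n) = \E m^2 - (\pi_\text{pos}-\pi_\text{pos}^\ndot)^2(\E m)^2 = \text{Var}(m) + \bigl(1-(\pi_\text{pos}-\pi_\text{pos}^\ndot)^2\bigr)(\E m)^2 \p
\]
Note that $1-(2\pi_\text{pos}-1)^2 = 4\pi_\text{pos}\pi_\text{pos}^\ndot$. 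So the variance equals $\text{Var}(m) + 4\pi_\text{pos}\pi_\text{pos}^\ndot(\E m)^2$. The first term vanishes by the nonnegative case. In the second, $\E m$ is bounded by $2^k-1$, and $\pi_\text{pos}\pi_\text{pos}^\ndot\to 0$ as $\pi_\text{pos}\to\bar x_\text{pos}\in\{0,1\}$, so the second term also tends to $0$.

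The one step needing care is the signed-zero convention. The variance is taken under $p$ on $\mZ$, but $p$ is the pushforward of $q$ under $d$, and $-0$ and $+0$ decode to the same value $0$. So $\E n$ and $\E n^2$ computed over bitstrings coincide with those under $p$, and the computation above is valid.

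I expect the main (mild) obstacle to be stating cleanly the independence of $s$ and $m$ and the law-of-total-variance step. If the paper prefers to avoid probabilistic language, I would instead expand $\E n^2 = \sum_{\x}\bpi^\x d(\x)^2$ directly. That sum splits into diagonal terms $\sum_i 4^{i-1}\pi_i$ and cross terms $\sum_{i\ne j}2^{i-1}2^{j-1}\pi_i\pi_j$, and it simplifies to the same expression.
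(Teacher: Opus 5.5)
Your proof is correct, but it takes a different route from the paper. The paper never computes the variance. It writes $\E(n-\hat n)^2=\sum_{\x}\bpi^{\x}(d(\x)-\hat n)^2$ and rewrites $d(\x)-\hat n$ as $\sum_i (x_i-\pi_i)2^{i-1}$, with a sign-split analogue in the unbounded case. It then argues term by term: $\bpi^{\x}\to 0$ for every $\x\neq\bar\x$, and in the one surviving term each factor $x_i-\pi_i\to 0$. This is essentially the generic fact that a distribution on a finite set converging to a point mass has vanishing variance. It needs only the product form of $\bpi^{\x}$ and the mean formula from the previous propositions.

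You instead use independence of the bits to get exact expressions. These are $\sum_i 4^{i-1}\pi_i\pi_i^{\ndot}$ in the nonnegative case and $\text{Var}(m)+4\pi_\text{pos}\pi_\text{pos}^{\ndot}(\E m)^2$ in the unbounded case, and the limit is then immediate. This gives more than the proposition asks for:
\begin{itemize}
\item it gives the rate of convergence;
\item it shows the variance is dominated by uncertainty in the high-order bits, through the weights $4^{i-1}$;
\item it shows that any uncertainty in the sign bit is amplified by $(\E m)^2$.
\end{itemize}
The last two points are a clean theoretical counterpart to the paper's empirical remark that Bitwise's large RMSE comes from residual uncertainty in significant bits.

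Two further details are handled well. Your treatment of the signed zero via the pushforward is correct. So is your explicit restriction to $\bar\x\in\mB^k$; the statement fails for non-binary limits such as all $\pi_i=\frac12$, which gives a uniform distribution. The paper leaves this restriction implicit, so your version is slightly more careful here.
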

\begin{proof}
We write the variance as $\E (n - \hat n)^2$ with $\hat n$ the mean. This gives us for the \textbf{nonnegative} distribution
\begin{align*}
	\E (n - \hat n)^2 &= \sum_{\x \in \mB^k} \bpi^\x (d(x) - \rc{\hat n})^2 \\
	&= \sum_{\x \in \mB^k} \bpi^\x (x_12^0 + \ldots + x_k2^{k-1} -\rc{ \pi_12^0 - \ldots - \pi_k2^{k-1}})^2 \\
		&= \sum_{\x \in \mB^k} \bpi^\x \left(\gc{(x_1 - \pi_1)}2^0 + \ldots + \gc{(x_k - \pi_k)}2^{k-1} \right)^2 \p \\
\end{align*}

As $\bpi \to \bar\x$ the value $\bpi^\x$ goes to $0$ for any $\x \neq \bar\x$ to $1$ for $\x = \bar\x$, causing all terms but the one corresponding to $\bar\x$ to vanish. In that remaining term, the factors $\gc{x_i - \pi_i}$ go to zero, since $\pi_i \to x_i$, causing this final term also to vanish, resulting in a variance of 0. 

For the \textbf{unbounded} distribution, we have

\noindent\hspace{-1.2em}
\begin{minipage}{\textwidth + 1.2em}
\begin{align*}
	\E&(n - \hat n)^2 = \sum_{x_\text{pos}\in \mB, \x \in \mB^{k}} \pi_\text{pos}\bpi^\x (d(x_\text{pos}\x) - \rc{\hat n})^2 \\
&= \sum_{x_\text{pos}, \x} \pi_\text{pos}\bpi^\x ((2\gc{x_\text{pos}}-1)(x_12^0 + \ldots + x_k2^{k-1}) - \rc{ (\pi_\text{pos} - \pi_\text{pos}^\ndot)(\pi_12^0 + \ldots + \pi_k2^{k-1})})^2 \\
&= \pi_\text{pos}\sum_{\x} \bpi^\x (\gc{1}\cdot(x_12^0 + \ldots + x_k2^{k-1}) - (\pi_\text{pos} - \pi_\text{pos}^\ndot)(\pi_12^0 + \ldots + \pi_k2^{k-1}))^2 \\ 
&\;\; + 
		\pi_\text{pos}^\ndot\sum_{\x} \bpi^\x (\gc{-1}\cdot(x_12^0 + \ldots + x_k2^{k-1}) - (\pi_\text{pos} - \pi_\text{pos}^\ndot)(\pi_12^0 + \ldots + \pi_k2^{k-1}))^2 \\
&= \pi_\text{pos}\sum_{\x} \bpi^\x (\bc{(x_1 - (\pi_\text{pos} - \pi_\text{pos}^\ndot)\pi_1)}2^0 + \ldots + \bc{(x_k -(\pi_\text{pos}- \pi_\text{pos}^\ndot)\pi_k)}2^{k-1})^2 \\ 
&\;\; + 
		\pi_\text{pos}^\ndot\sum_{\x} \bpi^\x (\bc{(-x_1 - (\pi_\text{pos} - \pi_\text{pos}^\ndot)\pi_1)}2^0 + \ldots + \bc{(-x_k -(\pi_\text{pos}- \pi_\text{pos}^\ndot)\pi_k )}2^{k-1})^2 \\	
\end{align*}
\end{minipage}

As $\bpi \to \x$, we first check whether $\x_\text{pos} = 1$. If so, then the second term vanishes. In the first term, each factor $\bc{x_i - (\pi_\text{pos} - \pi_\text{pos}^\ndot)\pi_i}$ reduces to $x_i - \pi_i \to 0$, making the total sum zero.
If $\x_\text{pos} = 0$, the first term vanishes, and in the second term, each factor $\bc{-x_i - (\pi_\text{pos} - \pi_\text{pos}^\ndot)\pi_i}$ reduces to $-x_i + \pi_i \to 0$, and the total sum again goes to $0$. \iftoggle{tmlr}{}{\qed}
\end{proof}

%% file: sections/experiments.tex
We conduct a series of experiments to evaluate the effectiveness of the different distribution functions in modeling integer-valued labels. We use tabular data, sequential time-series data, and image data to provide a comprehensive evaluation. The choice of datasets, including Bicycles, Upvotes, Migration, MAESTRO, MNIST, FashionMNIST and CIFAR10, ensures that the models are tested on the bounded, unbounded, and one-way bounded methods. 

In all experiments, we aim to answer two main questions:
\begin{enumerate}
\item Assuming a discrete distribution on the labels is required, which of the six candidates performs best in terms of negative log-loss?
\item Assuming a continuous relaxation is also an option, do any of the candidates outperform such a relaxation in terms of RMSE?
\end{enumerate}

For the tabular regression and MAESTRO experiments, each experiment is executed with a sweep over learning rates: \(3.4\cdot10^{-3}\), \(1\cdot10^{-3}\), \(3.4\cdot10^{-4}\), \(1\cdot10^{-4}\), \(3.4\cdot10^{-5}\), and \(1\cdot10^{-5}\). For these experiments, we also test mixture models with \(k \in \{1, 2, 4, 8\}\), where \(k=1\) is the standard unmixed distribution without an extra mixture-weight output. For the image modeling experiments, due to computational cost, we evaluate each distribution using a single component and a mixture model with \(K=10\), as described in the PixelCNN++ experiments below. Across experiments, hyperparameters are selected based on the log loss on the validation set, and results are reported on the test set. For the tabular regression and MAESTRO experiments, results are reported as the mean \(\pm\) standard error (SE) over 10 random seeds. The dataset is kept fixed between runs; that is, we do not re-split or resample. For the image modeling experiments, the substantially higher computational cost of training PixelCNN models, each requiring multiple days on a single GPU, makes multiple training runs impractical; we therefore report only the results of a single run.

In addition to the distribution functions introduced in this work, we include several baselines. The \textbf{Poisson} distribution serves as a natural baseline for one-way bounded (non-negative) count data. Since the Poisson is defined only on $\mathbb{Z}_{\geq 0}$, it cannot be applied to unbounded targets such as net migration, nor to the bounded pixel range $[0, 255]$ in the image experiments; Poisson results are therefore reported only for the Bicycles, Upvotes and MAESTRO datasets. A \textbf{Categorical} distribution is included for settings with known, finite support. The categorical treats each integer as an independent class, ignoring the ordinal structure of the data, and cannot generalize to values outside the observed range, making it unsuitable for unbounded targets where unseen values may appear at test time. We report categorical results only in the non-mixture image experiments: the number of parameters scales linearly with both the support size and the number of mixture components $K$, leading to prohibitive cost without commensurate gains for $K>1$. The \textbf{discretized logistic} (Dlogistic) from PixelCNN++ \citep{salimans2017pixelcnnimprovingpixelcnndiscretized} serves as the primary comparison for the image experiments. Dlogistic requires a predefined bounded support for discretization, which is naturally available for pixel values ($[0, 255]$) but impractical for the tabular targets with no fixed bounds or with ranges spanning $\pm10^6$.

\subsection{Tabular regression}
We use the following datasets

The \textbf{Bicycles} dataset~\cite{misc_bike_sharing_275} focuses on predicting the hourly number of bicycles parked in a designated space within the Capital Bike Sharing System of Washington D.C. initially comprising 12 features, one-hot encoding of nominal features expands the dataset to 58 features. The dataset is divided into a training set (10,948 samples, 63\%), a test set (5,214 samples, 30\%), and a validation set (1,217 samples, 7\%). The target variable exhibits moderate overdispersion with a dispersion index (DI = $\sigma^2/\mu$) of 173.66 and a skewness of 1.28, indicating a right-skewed distribution with variance substantially exceeding the mean.

In the \textbf{Upvotes} dataset~\cite{naseer2020predict}, the task is to predict the number of upvotes a social media post will receive. It employs a similar train/test/validation split as the Bicycles dataset, with 207,927 training samples, 99,014 test samples, and 23,104 validation samples. The primary goal is to predict upvotes based on the post's tag, number of answers, and the poster's reputation. One-hot encoding of the tag type (the only nominal feature) and standard scaling of other features results in a total of 12 features. This dataset displays extreme overdispersion (DI = 38,238.31) with heavy tails (excess kurtosis = 8,919.66) and severe right-skewness (74.25), characteristic of viral social media dynamics where most posts receive few upvotes while a small fraction achieve disproportionately high engagement.

The \textbf{Migration} dataset is a combination of various datasets of the World Bank Group~\cite{worldbank2024}. This dataset has as a target the net migration~\cite{net_migration_dataset} with different features from the datasets: agricultural development~\cite{agriculture_rural_development_indicators}, environmental indicators~\cite{environment_indicators}, climate change indicators~\cite{climate_change_indicators} and the financial sector~\cite{financial_sector_indicators}. This leaves 16,565 instances after combining these different datasets, this merged dataset still includes 8,620 instances which are dropped leaving with a complete dataset of 7,945 instances. Furthermore, the country names are factorized and all features except the country name, year and net migration are Standard Scaled. This gives 4,879 train samples, 2,066 test samples and 1,000 validation samples. Unlike the count-based targets of the previous datasets, net migration includes both positive and negative values with a mean close to zero ($\mu$ = 1,601.57) and exhibits extreme variance ($\sigma^2$ = 2.96 $\times 10^{10}$), resulting in heavy tails (excess kurtosis = 50.21) driven by outlier countries with large migration flows.

All three datasets exhibit significant overdispersion, making them suitable benchmarks for evaluating distribution functions capable of capturing varying degrees of dispersion. A detailed summary of the target distributions, including histograms and dispersion characteristics, is provided in Appendix~\ref{app:eda}. 

For all tabular datasets, we use a multi-layer perceptron (MLP) with ReLU activation functions. The first layer has an input size of $n$, the second layer a hidden size of 128 and the output layer a size of 1 in the case of the squared error, 32 in the case of Bitwise and 2 for the rest of the distributions. The mapping from raw network outputs to valid distribution parameters is detailed in Appendix \ref{app:activations}.

\subsection{MAESTRO}

The \textbf{MAESTRO} (V3) dataset \cite{hawthorne2018enabling} consists of 200 hours of MIDI recordings of professional piano players playing challenging pieces of classical music captured on a digital Piano. 

MAESTRO is a curated dataset of classical piano music featuring various levels of expressiveness, widely considered the gold standard for evaluating generative music models. For this study, the dataset yields 13,914,411 training, 4,423,513 test, and 1,488,572 validation samples. The number of sequences per file was calculated by subtracting the context window length (128 tokens) from the file's total token count.
 
We stop short of building a full generative model for MIDI music, leaving that to future work, and focus only on the task of predicting at any moment in time, \emph{the time in ticks\footnotemark~for which the next note will be held}, given the sequence of $n$ MIDI events observed just before. 

\footnotetext{The \emph{tick} is a unit of time specific to MIDI music. The length of a tick can be different between MIDI files, but a common choice is 5ms per tick.}

The MIDI data is encoded in six different events: Event (see table \ref{tab:MIDI_Repr}), Data1 (Pitch), Data2 (Velocity), Instrument, Channel, and Tick. The first five are embedded with an embedding size of 128. The Tick events are encoded using a linear layer where the tick integer, transformed to a binary string (32 bits) is concatenated with the log value of the tick. In table \ref{tab:MIDI_Repr} an overview of the value ranges can be found.

\begin{table*}[hbt!]
    \centering
    \caption{MIDI representation}
    \begin{tabular}{ c  c  c  c  c  c }
        \hline
        Event & Data 1 & Data 2 & Channel & Instrument & MIDI Tick\\
        \Xhline{1pt}
        Note on & 0-127 & 0-127 & 0-15 & 0-127 & 0-$\infty$\\
        \hline
        Note off & 0-127 & 0-127 & 0-15 & 0-127 & 0-$\infty$\\
        \hline
        Polytouch & 0-127 & 0-127 & 0-15 & 0-127 & 0-$\infty$\\
        \hline
        Pitch Bend & 0-127 & 0-127 & 0-15 & 0-127 & 0-$\infty$\\
        \hline
        Channel Pressure & 0-127 & 0-127 & 0-15 & 0-127 & 0-$\infty$\\
        \hline
    \end{tabular}
    \label{tab:MIDI_Repr}
\end{table*}

To make the predictions, we use an LSTM. The model consists of 2 LSTM layers, each with a hidden dimension of 128. The context length is 128 and the model is trained for 100 epochs. Depending on the distribution target (similarly to the MLP) 2 outputs are used for the distributions, 1 output is used for squared error and 16 for Bitwise (since the highest number in the dataset does not exceed 16 bits there is no need to go over 16 bits in the output). During training, the per-bit binary cross-entropy loss for Bitwise is weighted by $2^i$ for bit position i, giving higher-order bits proportionally more influence on the gradient. During evaluation, all bits are weighted equally. This weighting reflects the fact that errors in higher-order bits have a larger impact on the decoded integer value. A positional weighting scheme is applied to the bitwise training loss (Appendix \ref{app:bitweight}).

The model is evaluated every 5000 training samples with a subset of 500 validation samples. After training, we evaluate on the full test data.

During training, the loss is calculated over the entire output sequence and the mean is taken over the batch and sequence. During validation and testing the loss is calculated over only the next token in the sequence which are then summed and averaged over the number of instances.

\subsection{PixelCNN++}

Finally, we test our distributions in the task of image generation. As noted in the introduction, this requires a prediction in the range $[0, 255]$ for every color channel of every pixel, making it a good setting to test the two-way bounded distributions. 

The current state of the art in this domain is almost exclusively built on diffusion models. However, diffusion is not the most natural place to test our integer models. In \cite{ho2020denoising}, for example, a separate decoder module is required in the last generation step to map the continuous representation of the image to an integer representation. Even then, the authors note

\begin{displayquote}
Still, while our lossless codelengths are better than the large estimates reported for energy based models and score matching using annealed importance sampling [...], they are not competitive with other types of likelihood-based generative models [...].
Since our samples are nonetheless of high quality, we conclude that diffusion models have an inductive bias that makes them excellent lossy compressors.
\end{displayquote}

The discrete decoder was dispensed with in most follow-up work like \cite{dhariwal2021diffusion} and diffusion models in general produce outputs in a continuous space, with evaluation performed solely on metrics like IS and FID.

Since we are interested in using compression (that is, the negative log loss) as our main metric of evaluation---with perceptual quality a secondary consideration---we will focus on likelihood-based generative models, specifically pixel-wise auto-regressive models. So far, these have often used categorical distributions \cite{oord2016conditionalimagegenerationpixelcnn} which ignore the ordinal aspect of the data, or various continuous relaxations \cite{loaizaganem2019continuousbernoullifixingpervasive,ramesh2021zeroshottexttoimagegeneration}.

A notable advancement in this domain is PixelCNN++ \cite{salimans2017pixelcnnimprovingpixelcnndiscretized}, which uses discretized logistic mixtures to model pixel values. While this approach improves image quality and lowers Bits Per Dimension (BPD) compared to the original PixelCNN, it relies on an underlying continuous density function that is subsequently discretized. In this work, we adopt the architectural backbone of PixelCNN++, specifically the use of padding and shifting over masked kernels, as well as the use of coefficients to maintain RGB ordinality. With this architecture we adopt several distribution functions to compare against the discretized logistic mixture model.

Note that the aim here is not to achieve state-of-the-art compression loss, or image generation. We simply want to compare the different output distributions on a relatively level playing field, by fixing the upstream architecture. For this purpose, the PixelCNN++ model provides a good balance between performance and required resources.

In \cite{salimans2017pixelcnnimprovingpixelcnndiscretized}, the authors bridge the gap between continuous predictions and discrete pixel values by integrating the logistic distribution over bins, effectively rounding the continuous values, as in the discretized normal and Laplace distributions above. As we saw in the tabular data, discrete analogues often work better than discretization. We therefore hypothesize that Dalap improves stability during both training and generation by eliminating the need for discretization or rounding.

We exclude Dweib and Danorm from the PixelCNN experiments due to practical limitations. Dweib exhibited numerical instability during preliminary experiments, particularly when modeling the bounded [0, 255] range required for pixel values, leading to frequent convergence failures. Danorm, while theoretically sound, requires computing a numerical approximation of the partition function by summing over a range of integers (as described in Section~\ref{section:danorm}). For image generation, this computation must be performed for every pixel in every image in the batch, requiring large amounts of memory. In our experiments, Danorm required over 90GB of VRAM, exceeding our available hardware capacity, making it impractical for image modeling tasks. We also exclude the categorical distribution from the mixture image experiments: the number of parameters scales linearly with both the support size and $K$, resulting in prohibitive computational cost without performance gains over $K=1$.

To validate this, we compare our functions against the original PixelCNN++ results. We evaluate performance using two metrics: Bits Per Dimension (BPD) to measure compression and log-likelihood quality, and the Fr\'echet Inception Distance (FID) to assess the perceptual quality of the generated images and their similarity to the input distribution across the domains of reconstruction, seeded sampling and random sampling. Furthermore, we investigate whether our method can achieve superior or equal generative performance (lower FID) and likelihood scores (lower BPD) with fewer parameters, thereby demonstrating the efficiency of modeling integer labels with truly discrete distributions.

We evaluate the generative capabilities of our models in three distinct ways:
\begin{enumerate}
    \item \textit{Unconditional Generation}: Sampling is performed starting from an empty canvas (initialized to a value of -1).
    \item \textit{Image Completion}: The model is seeded with the first 8 rows of a ground-truth image from the dataset, and generates the remaining pixels.
    \item \textit{Reconstruction}: The model regenerates images based on full input representations.
\end{enumerate}

\noindent We use three common low-resolution datasets: MNIST \cite{lecun1998gradient} which contains handwritten digits, FashionMNIST \cite{Xiao2017FashionMNIST}  which contains images of items of clothing  and CIFAR10 \cite{Krizhevsky2009CIFAR10}, which contains color images of ten diverse classes of objects. 

Training follows the hyperparameters described in \cite{salimans2017pixelcnnimprovingpixelcnndiscretized}. We use Exponential Moving Average (EMA), and a learning rate decay of $0.99995$, with an initial learning rate of $10^{-3}$ for CIFAR10 and $10^{-4}$ for FashionMNIST and MNIST. We evaluate each distribution function using both a single component and a mixture of 10 components. All models are trained for 300 epochs across all datasets.

%% file: sections/results.tex
\subsection{Count data regression}
We evaluate several distribution functions for integer-valued regression on four datasets: Bicycles, Upvotes, Migration, and Maestro. Performance is reported in Tables \ref{tab:test_res_non_mixed} and \ref{tab:test_res_mixture} using two metrics: Bits (negative log-likelihood in base 2; lower is better) and RMSE (accuracy of the predicted mean; lower is better). 

Dalap achieves the lowest bits on Bicycles ($6.78 \pm 0.02$) and Upvotes ($6.74 \pm 0.01$) in the non-mixture setting. On MAESTRO, Dalap again achieves the lowest bits ($5.00 \pm 0.00$), reinforcing its strong performance across datasets with varying dispersion levels. Poisson, despite being a natural baseline for non-negative count data, performs poorly on all applicable datasets (11.4, 56.6, and 16.4 bits on Bicycles, Upvotes, and MAESTRO respectively), confirming that its single-parameter form cannot capture the variance structure present in these targets. Notably, Poisson still achieves competitive RMSE on Bicycles and MAESTRO, even obtaining the best RMSE among all distributions in the mixture setting on Bicycles ($59.0 \pm 0.9$), suggesting that its mean prediction can be reasonable even when the distributional fit is poor. When examining the RMSE scores, Dalap benefits considerably from multiple mixture components, achieving competitive RMSE performance while maintaining strong bits scores. 

The results show variability in the performance across datasets, suggesting that practitioners should test different output distributions for their data. However, if the negative log likelihood is the key metric, then Dalap is likely a good candidate in most settings.


On the Migration dataset, Bitwise achieves the best bits in both the non-mixture setting ($22.9 \pm 1.0$) and the mixture setting ($18.0 \pm 0.0$). Dalap without mixture components shows sensitivity to initialization on this dataset: 8 of 10 seeds converge to low bits values, while 2 seeds diverge to substantially higher losses, resulting in a large standard error ($2.9 \times 10^5 \pm 1.9 \times 10^5$). This reflects the challenge of modeling data with extreme dispersion (DI $> 10^{7}$) and heavy tails (excess kurtosis = 50.21) using a single component. This instability is effectively resolved by the mixture model: with $K=8$ components, Dalap achieves $20.4 \pm 1.0$ bits with stable convergence across all seeds, placing it close to Bitwise ($18.0 \pm 0.0$). The strong performance of Bitwise on Migration, combined with its weaker results on the other datasets, suggests that the best distribution depends on the characteristics of the target data, with Migration's extreme variance favoring these alternatives over Dalap.

Furthermore, our other distribution function, Bitwise, performs well in terms of bits on highly dispersed data such as Migration (as discussed above). However, it exhibits significantly increased RMSE scores compared to other distribution functions across all datasets. It is likely that some probability assigned to higher-order bits causes large outliers to be assigned small but non-zero probability, to which the RMSE is sensitive. Whether this is a problem depends largely on the use case for which the model is deployed. 


The Dlaplace, Dnormal, and Danorm functions show related behavior across some datasets, which is expected given their structural similarities in how the distributions are formulated. All three achieve reasonable performance in some settings, but none of them consistently dominates across datasets. Danorm is conceptually close to squared-error regression, but in these experiments its rapidly decaying tails often hurt negative log-likelihood and its RMSE does not consistently improve over the other alternatives.

Finally, Dweib underperforms across multiple metrics, demonstrating its practical limitations. It is restricted to non-negative support, precluding its use on the Migration dataset, and exhibits numerical instability during training, leading to poor performance on the Upvotes dataset ($130.3 \pm 13.6$ bits).

When comparing against the squared error baseline, which uses a continuous relaxation, we observe that the continuous approach achieves the best RMSE in three out of four datasets in the non-mixture setting. In the mixture setting, however, discrete mixture models outperform squared error on some datasets, showing that the RMSE advantage of continuous relaxation is not uniform. This suggests that when RMSE is the primary concern and a discrete distribution is not strictly required, continuous relaxations remain strong baselines, but they are not guaranteed to provide the best performance once mixture models are considered. However, when a proper discrete distribution is needed, for example to compute valid log-probabilities or to combine losses from discrete and continuous predictions, Dalap likely offers the best balance of bits and RMSE performance.

\newcommand{\e}[1]{\text{e#1}}
\begin{table*}[hbt!]
  \centering
  \footnotesize
  \setlength{\tabcolsep}{2.5pt}
  \sisetup{detect-weight, group-minimum-digits=4, mode=text, round-mode=places}
  \caption{Test set results for bicycles, upvotes, migration and maestro. Seeded results reported as mean $\pm$ SE over 10 seeds; best in bold.}
  \begin{tabular}{ l  l l  @{\quad}  l l  @{\quad}  l l  @{\quad}  l l }
      \toprule
      ~ & \multicolumn{2}{c}{Bicycles} & \multicolumn{2}{c}{Upvotes} &  \multicolumn{2}{c}{Migration} &  \multicolumn{2}{c}{MAESTRO} \\
      \cmidrule(lr){2-3} \cmidrule(lr){4-5} \cmidrule(lr){6-7} \cmidrule(lr){8-9}
      ~         & {Bits}      & {RMSE}       & {Bits}      & {RMSE}        & {Bits}       & {RMSE}          & {Bits}      & {RMSE} \\
      \midrule
      {Dalap} & $\mathbf{6.78 \pm 0.02}$ & $128 \pm 1$ & $\mathbf{6.74 \pm 0.01}$ & $\mathbf{126 \pm 3}$ & $2.9\e{5} \pm 1.9\e{5}$ & $9.0\e{4} \pm 2.6\e{4}$ & $\mathbf{5.00 \pm 0.00}$ & $53.9 \pm 0.0$ \\
      {Bitwise} & $8.13 \pm 0.04$ & $1.5\e{4} \pm 2.2\e{3}$ & $6.86 \pm 0.00$ & $3400 \pm 654$ & $\mathbf{22.9 \pm 1.0}$ & $2.1\e{5} \pm 1.6\e{4}$ & $5.17 \pm 0.01$ & $45.1 \pm 1.9$ \\
      {Dlaplace} & $7.04 \pm 0.04$ & $64.1 \pm 0.6$ & $7.18 \pm 0.01$ & $370 \pm 9$ & $48.1 \pm 7.5$ & $1.1\e{5} \pm 2.4\e{4}$ & $5.42 \pm 0.00$ & $47.8 \pm 0.1$ \\
      {Dnormal} & $6.87 \pm 0.01$ & $65.9 \pm 0.3$ & $7.09 \pm 0.00$ & $513 \pm 18$ & $23.2 \pm 0.0$ & $1.7\e{5} \pm 2.9\e{3}$ & $5.58 \pm 0.00$ & $51.1 \pm 0.0$ \\
      {Dweib} & $9.15 \pm 0.85$ & $112 \pm 6$ & $130.3 \pm 13.6$ & $3026 \pm 177$ & {---} & {---} & $7.88 \pm 1.18$ & $74.0 \pm 3.2$ \\
      {Danorm} & $7.40 \pm 0.06$ & $142 \pm 8$ & $8.42 \pm 0.13$ & $1584 \pm 71$ & $4.2\e{3} \pm 5.3\e{2}$ & $1.5\e{5} \pm 1.2\e{4}$ & $5.27 \pm 0.01$ & $49.3 \pm 0.3$ \\
      {Poisson} & $11.4 \pm 0.1$ & $47.1 \pm 0.3$ & $56.6 \pm 2.0$ & $1487 \pm 68$ & {---} & {---} & $16.4 \pm 0.3$ & $46.6 \pm 0.8$ \\
      \midrule
      {Squared error} & {---} & $\mathbf{44.3 \pm 0.4}$ & {---} & $1258 \pm 49$ & -- & $\mathbf{5.2\e{4} \pm 8.3\e{3}}$ &  {---} & $\mathbf{32.3 \pm 0.1}$ \\
      \bottomrule
  \end{tabular}
  \label{tab:test_res_non_mixed}
\end{table*}
 
\begin{table*}[hbt!]
  \centering
  \footnotesize
  \setlength{\tabcolsep}{2.0pt}
  \sisetup{detect-weight, group-minimum-digits=4, mode=text, round-mode=places}
  \caption{Test set results for mixture models on bicycles, upvotes, migration and maestro. Seeded results reported as mean $\pm$ SE over 10 seeds; best in bold.}

  \begin{tabular}{ l  c l l  @{\quad}  c l l  @{\quad}  c l l  @{\quad}  c l l }
      \toprule
      ~ & \multicolumn{3}{c}{Bicycles} & \multicolumn{3}{c}{Upvotes} &  \multicolumn{3}{c}{Migration} &  \multicolumn{3}{c}{MAESTRO} \\
      \cmidrule(lr){2-4} \cmidrule(lr){5-7} \cmidrule(lr){8-10} \cmidrule(lr){11-13}
      ~ & {K} & {Bits} & {RMSE} & {K} & {Bits} & {RMSE} & {K} & {Bits} & {RMSE} & {K} & {Bits} & {RMSE} \\
      \midrule
      {Dalap} & 4 & $\mathbf{6.78 \pm 0.02}$ & $130 \pm 1$ & 8 & $\mathbf{6.49 \pm 0.00}$ & $\mathbf{122 \pm 3}$ & 8 & $20.4 \pm 1.0$ & $6.2\e{4} \pm 2.4\e{4}$ & 8 & $4.88 \pm 0.04$ & $54.3 \pm 0.3$ \\
      {Bitwise} & 8 & $8.09 \pm 0.03$ & $1.1\e{4} \pm 2.1\e{3}$ & 4 & $6.93 \pm 0.02$ & $1.3\e{6} \pm 6.8\e{5}$ & 8 & $\mathbf{18.0 \pm 0.0}$ & $9.2\e{6} \pm 2.1\e{6}$ & 8 & $5.19 \pm 0.02$ & $95.5 \pm 28.6$ \\
      {Dlaplace} & 2 & $6.86 \pm 0.02$ & $65.9 \pm 0.9$ & 8 & $6.50 \pm 0.01$ & $267 \pm 6$ & 2 & $23.8 \pm 4.7$ & $\mathbf{3.4\e{4} \pm 1.5\e{4}}$ & 8 & $4.90 \pm 0.00$ & $\mathbf{53.9 \pm 0.1}$ \\
      {Dnormal} & 2 & $6.84 \pm 0.01$ & $66.0 \pm 0.9$ & 8 & $6.51 \pm 0.00$ & $543 \pm 29$ & 8 & $20.0 \pm 0.6$ & $7.6\e{4} \pm 1.8\e{4}$ & 8 & $4.95 \pm 0.00$ & $57.2 \pm 0.2$ \\
      {Dweib} & 8 & $7.13 \pm 0.03$ & $118 \pm 3$ & 8 & $6.54 \pm 0.01$ & $1336 \pm 141$ & {---} & {---} & {---} & 8 & $4.92 \pm 0.02$ & $56.3 \pm 0.3$ \\
      {Danorm} & 8 & $7.11 \pm 0.22$ & $154 \pm 33$ & 4 & $6.71 \pm 0.06$ & $1346 \pm 104$ & 8 & $2807 \pm 218$ & $2.6\e{5} \pm 8.7\e{4}$ & 8 & $\mathbf{4.85 \pm 0.00}$ & $57.0 \pm 0.7$ \\
      {Poisson} & 8 & $7.41 \pm 0.07$ & $\mathbf{59.0 \pm 0.9}$ & 8 & $10.9 \pm 0.5$ & $1711 \pm 117$ & {---} & {---} & {---} & 8 & $5.29 \pm 0.01$ & $67.0 \pm 0.9$ \\
      \bottomrule
  \end{tabular}
  \label{tab:test_res_mixture}
\end{table*}

\subsection{PixelCNN pixel regression}
For image modeling tasks, we evaluate the distribution functions on MNIST, FashionMNIST, and CIFAR10 datasets. Results are presented in Tables \ref{tab:test_res_images} and \ref{tab:test_res_images_mixture} for single and mixture models respectively, and Tables \ref{tab:fid_res_images} and \ref{tab:fid_res_images_mixed} for FID scores.

\subsubsection{Bits and RMSE}  
Dalap demonstrates strong performance across all image datasets. In the non-mixture setting (Table \ref{tab:test_res_images}), Dalap achieves the best bits per dimension on MNIST (0.61) and FashionMNIST (1.23), and performs competitively on CIFAR10 (3.11 bits per dimension), only marginally behind Laplace (3.09). The small difference of 0.02 bits per dimension on CIFAR10 is negligible in practice, especially considering Dalap's superior performance on the other datasets. Dlogistic, included here as the established baseline from PixelCNN++, achieves 0.69 on MNIST and 3.18 on CIFAR10 in the non-mixture setting, placing it behind Dalap on all three datasets. The categorical baseline achieves competitive FID scores for reconstruction (see Section \ref{sec:fid}), but exhibits substantially worse bits per dimension compared to distributions that respect ordinal structure (e.g., 1.18 vs.\ 0.61 for Dalap on MNIST, and 4.55 vs.\ 3.11 on CIFAR10). This confirms that while a categorical output can produce visually plausible reconstructions, it assigns probability mass less efficiently by treating each intensity value as an unrelated class.

In the mixture setting (Table \ref{tab:test_res_images_mixture}), Dalap maintains its strong performance with 0.66 bits per dimension on MNIST, matching FashionMNIST performance at 1.23, and achieving the best result on CIFAR10 (3.0206 bits per dimension). When comparing against Dlogistic, Dalap outperforms it on CIFAR10 by 0.0027 bits per dimension, and shows comparable performance on MNIST (0.66 vs 0.67). This demonstrates that Dalap can match or exceed the performance of the established discretized continuous distribution.

Examining RMSE scores, Dalap consistently achieves the lowest values in both single and mixture settings across all datasets. In the mixture models, Dalap achieves RMSEs of 330.48 (MNIST), 317.12 (FashionMNIST), and 510.64 (CIFAR10). This indicates that Dalap not only models the probability distribution well but also produces accurate point predictions.


Bitwise shows significantly worse performance on image data, with bits per dimension more than double that of Dalap in most cases. This is particularly pronounced in the mixture setting, where Bitwise achieves 8.02 bits per dimension on CIFAR10 compared to Dalap's 3.02. The high RMSE values for Bitwise (e.g., 5051.66 on CIFAR10 mixture models) further confirm that representing pixels bit-by-bit introduces substantial prediction errors. 


The discretized continuous distributions (Laplace and Dnormal) show reasonable performance but generally fall short of Dalap. Laplace performs well on CIFAR10 in both settings but struggles on MNIST (24.47 bits per dimension in the non-mixture case), suggesting that simple discretization may not capture the distribution effectively for all image types. Dnormal exhibits similar patterns, with particularly poor performance on grayscale images.

\begin{table*}[hbt!]
  \centering
  \setlength{\tabcolsep}{3pt}
  \sisetup{detect-weight, group-minimum-digits=4, mode=text}
  \caption{Test set results for image modeling on MNIST, FashionMNIST and CIFAR10, best in bold.}
  \begin{tabular}{ l 
      S[table-format=2.2] S[table-format=4.0]
      @{\quad}
      S[table-format=1.2] S[table-format=4.0]
      @{\quad}
      S[table-format=1.2] S[table-format=4.0]
      } 
      \toprule
      ~ & \multicolumn{2}{c}{MNIST} & \multicolumn{2}{c}{FashionMNIST} & \multicolumn{2}{c}{CIFAR10} \\
      \cmidrule(lr){2-3} \cmidrule(lr){4-5} \cmidrule(lr){6-7}
      ~ & {Bits/dim} & {RMSE} & {Bits/dim} & {RMSE} & {Bits/dim} & {RMSE} \\
      \midrule
      {Dalap}      & \bfseries 0.61 & \bfseries 253   & \bfseries 1.23 & \bfseries 317   & 3.11 & \bfseries 480 \\
      {Bitwise}    & 1.63 & 535                        & 3.76 & 881                        & 5.66 & 1534 \\
      {Laplace}    & 24.47 & 453                       & 5.84 & 383                        & \bfseries 3.09 & 527 \\
      {Dnormal}    & 9.10 & 3992                       & 7.91 & 3711                       & 3.37 & 530 \\
      {Dlogistic}  & 0.69 & 337                        & 1.45 & 485                        & 3.18 & 570 \\
      {Cat}        & 1.18 & 351                        & 2.83 & 489                        & 4.55 & 810 \\
      \bottomrule
  \end{tabular}
  \label{tab:test_res_images}
\end{table*}

\begin{table*}[hbt!]
  \centering
  \setlength{\tabcolsep}{3pt}
  \sisetup{detect-weight, group-minimum-digits=4, mode=text}
  \caption{Test set results for mixture models with a K of 10 on MNIST, FashionMNIST and CIFAR10, best in bold.}
  \begin{tabular}{ l 
      S[table-format=1.2] S[table-format=4.0]
      @{\quad}
      S[table-format=1.2] S[table-format=4.0]
      @{\quad}
      S[table-format=1.2] S[table-format=4.0]
      } 
      \toprule
      ~ & \multicolumn{2}{c}{MNIST} & \multicolumn{2}{c}{FashionMNIST} & \multicolumn{2}{c}{CIFAR10} \\
      \cmidrule(lr){2-3} \cmidrule(lr){4-5} \cmidrule(lr){6-7}
      ~ & {Bits/dim} & {RMSE} & {Bits/dim} & {RMSE} & {Bits/dim} & {RMSE} \\
      \midrule
      {Dalap}     & \bfseries 0.66 & \bfseries 330 & \bfseries 1.23 & \bfseries 317 & \bfseries 3.02 & \bfseries 511 \\
      {Bitwise}   & 2.71 & 6753                     & 4.98 & 5699   & 8.02 & 5052 \\
      {Laplace}   & 3.43 & 1085                     & 2.20 & 3539   & 3.05 & 515 \\
      {Dnormal}   & 3.97 & 1643                     & 3.09 & 1111   & 3.07 & 516 \\
      {Dlogistic} & 0.67 & 402                      & 1.37 & 555    & \bfseries 3.02 & 555 \\
      \bottomrule
  \end{tabular}
  \label{tab:test_res_images_mixture}
\end{table*}

\FloatBarrier

\subsubsection{Fr\'{e}chet Inception Distance}
\label{sec:fid}
In the FID scores of generated samples (Tables \ref{tab:fid_res_images} and \ref{tab:fid_res_images_mixed}), we observe nuanced performance patterns that depend on both the dataset and the specific generation task.

In the non-mixture setting (Table \ref{tab:fid_res_images}), Dalap achieves the best seeded sampling performance on MNIST (16.12) and FashionMNIST (52.08), demonstrating its ability to generate high-quality samples when conditioned on partial input. For CIFAR10, Laplace achieves the best seeded sampling score (45.74), closely followed by Dalap (49.76). For random sampling, Dalap excels on MNIST (33.19), while Laplace performs best on CIFAR10 (115.11). Interestingly, both the categorical distribution and Dlogistic achieve highly competitive reconstruction FID scores across the datasets. In particular, Dlogistic obtains the best reconstruction FID on FashionMNIST and CIFAR10, while the categorical distribution obtains the best reconstruction FID on MNIST. This highlights a compelling divergence, indicating that a model's log-likelihood does not necessarily correlate perfectly with its perceptual generation quality.

The mixture setting (Table \ref{tab:fid_res_images_mixed}) reveals a different pattern. Dlogistic dominates reconstruction tasks across all datasets, achieving FID scores of 42.00 (MNIST), 37.75 (FashionMNIST), and 26.50 (CIFAR10). This higher reconstruction performance likely stems from the smoothness properties of the underlying continuous distribution, which may better capture the perceptual structure of images. However, for sampling tasks, the performance is more balanced. Dalap achieves the best random sampling scores on MNIST (64.75) and FashionMNIST (171.59), while Dlogistic performs best on CIFAR10 (101.71). For seeded sampling, Dlogistic excels on MNIST (18.80), Dalap on FashionMNIST (62.56), and CIFAR10 shows competitive performance across multiple distributions with Laplace (41.12), Dnormal (42.12), and Dalap (44.73) all achieving similar scores.

These results indicate that the choice of distribution for image generation depends on the specific application requirements. Dlogistic offers better reconstruction quality, making it preferable for tasks like image compression or autoencoding where faithful reproduction is most important. However, Dalap demonstrates competitive or better performance for sampling tasks, particularly on grayscale datasets (MNIST, FashionMNIST), while maintaining better likelihood scores (bits per dimension) as shown in Tables \ref{tab:test_res_images} and \ref{tab:test_res_images_mixture}. The consistently strong performance of Dalap across both likelihood estimation and generative quality metrics suggests it is a robust choice for pixel-level modeling tasks where both probabilistic modeling and sample quality matter.

Bitwise, despite its competitive reconstruction FID in the non-mixture setting, shows poor performance in mixture models and across all sampling tasks. Dnormal exhibits mixed performance, occasionally achieving competitive scores (e.g., 42.12 for CIFAR10 seeded sampling in a mixture setting) but lacking consistency across tasks and datasets.

\begin{table*}[hbt!]
  \centering
  \setlength{\tabcolsep}{3pt}
  \sisetup{detect-weight, group-minimum-digits=4, mode=text}
  \caption{FID scores on test set for image modeling on MNIST, FashionMNIST and CIFAR10, best in bold.}
  \begin{tabular}{ l 
      S[table-format=3.0] S[table-format=3.0] S[table-format=3.0]
      @{\quad}
      S[table-format=3.0] S[table-format=3.0] S[table-format=3.0]
      @{\quad}
      S[table-format=3.0] S[table-format=3.0] S[table-format=3.0]
      } 
      \toprule
      ~ & \multicolumn{3}{c}{MNIST} & \multicolumn{3}{c}{FashionMNIST} & \multicolumn{3}{c}{CIFAR10} \\
      \cmidrule(lr){2-4} \cmidrule(lr){5-7} \cmidrule(lr){8-10}
      ~ & {Recon.} & {Seeded} & {Random} & {Recon.} & {Seeded} & {Random} & {Recon.} & {Seeded} & {Random} \\
      \midrule
      {Dalap}      & 230 & \bfseries 16  & \bfseries 33  &  236 & 52  & 154 &   308 & 50  & 146 \\
      {Bitwise}    & 66  & 56  & 90  &               106 & 145 & 299 &   120  & 211 & 290 \\
      {Laplace}    & 145 & 109 & 107 &                          258 & 102 & 226 &   308 & \bfseries 46  & \bfseries 115 \\
      {Dnormal}    & 438 & 274 & 354 &                          372 & 268 & 381 &   294 & 67  & 146 \\
      {Dlogistic}  & 41  & \bfseries 16  & 44  &  \bfseries 33 & 69  & 214 &   {\bfseries 36} & 70 & 133 \\
      {Cat}        & \bfseries 39  & 40  & 65  &  41 & \bfseries 47  & \bfseries 129 &   45  & 160 & 258 \\
      \bottomrule
      
  \end{tabular}
  \label{tab:fid_res_images}
\end{table*}

\begin{table*}[hbt!]
  \centering
  \setlength{\tabcolsep}{3pt}
  \sisetup{detect-weight, group-minimum-digits=4, mode=text}
  \caption{FID scores for mixture models with a K of 10 on test set for image modeling on MNIST, FashionMNIST and CIFAR10, best in bold.}
  \begin{tabular}{ l 
      S[table-format=3.0] S[table-format=3.0] S[table-format=3.0]
      @{\quad}
      S[table-format=3.0] S[table-format=3.0] S[table-format=3.0]
      @{\quad}
      S[table-format=3.0] S[table-format=3.0] S[table-format=3.0]
      } 
      \toprule
      ~ & \multicolumn{3}{c}{MNIST} & \multicolumn{3}{c}{FashionMNIST} & \multicolumn{3}{c}{CIFAR10} \\
      \cmidrule(lr){2-4} \cmidrule(lr){5-7} \cmidrule(lr){8-10}
      ~ & {Recon.} & {Seeded} & {Random} & {Recon.} & {Seeded} & {Random} & {Recon.} & {Seeded} & {Random} \\
      \midrule
      {Dalap}     & 232 & 54  & \bfseries 65  &            258 & \bfseries 63  & \bfseries 172  &  306 & 45  & 126 \\
      {Bitwise}   & 316 & 287 & 372 &                      296 & 275 & 397                      &  394 & 264 & 407 \\
      {Laplace}   & 232 & 155 & 186 &                      304 & 207 & 389                      &  306 & \bfseries 41  & 138 \\
      {Dnormal}   & 249 & 180 & 217 &                      306 & 146 & 218                      &  308 & 42  & 189 \\
      {Dlogistic} & \bfseries 42  & \bfseries 19  & 90  &  \bfseries 38  & 69  & 200            &  \bfseries 27  & \bfseries 41  & \bfseries 102 \\
      \bottomrule
  \end{tabular}
  \label{tab:fid_res_images_mixed}
\end{table*}

Overall, these experiments demonstrate that Dalap offers strong performance for discrete pixel prediction, matching or exceeding the established discretized logistic mixture approach in likelihood estimation while maintaining competitive generative quality, particularly for sampling tasks. 


\subsubsection{Generated samples}
We present visual samples in Figures \ref{fig:seeded_samples}, \ref{fig:random_samples} and \ref{fig:random_samples_cifar}. These figures show results from the mixture models (K=10) which provide the best results. Additional samples for all evaluated distributions are provided in Appendix \ref{app:generated_samples}.

\begin{figure*}[t]
  \centering
  \begin{minipage}{0.75\textwidth}
      \centering
      \begin{subfigure}[b]{0.32\textwidth}
        \includegraphics[width=\textwidth]{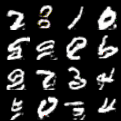}
        \caption{MNIST Dalap}
        \label{fig:seeded_mnist_dalap}
      \end{subfigure}
      \hfill
      \begin{subfigure}[b]{0.32\textwidth}
        \includegraphics[width=\textwidth]{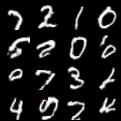}
        \caption{MNIST Dlogistic}
        \label{fig:seeded_mnist_dlogistic}
      \end{subfigure}
      \hfill
      \begin{subfigure}[b]{0.32\textwidth}
        \includegraphics[width=\textwidth]{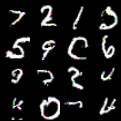}
        \caption{MNIST Bitwise}
        \label{fig:seeded_mnist_bitwise}
      \end{subfigure}
      
      \vspace{0.3cm}
      
      \begin{subfigure}[b]{0.32\textwidth}
        \includegraphics[width=\textwidth]{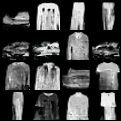}
        \caption{FashionMNIST Dalap}
        \label{fig:seeded_fashion_dalap}
      \end{subfigure}
      \hfill
      \begin{subfigure}[b]{0.32\textwidth}
        \includegraphics[width=\textwidth]{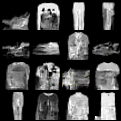}
        \caption{FashionMNIST Dlogistic}
        \label{fig:seeded_fashion_dlogistic}
      \end{subfigure}
      \hfill
      \begin{subfigure}[b]{0.32\textwidth}
        \includegraphics[width=\textwidth]{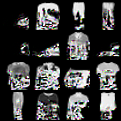}
        \caption{FashionMNIST Bitwise}
        \label{fig:seeded_fashion_bitwise}
      \end{subfigure}
      
      \vspace{0.3cm}
      
      \begin{subfigure}[b]{0.32\textwidth}
        \includegraphics[width=\textwidth]{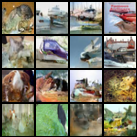}
        \caption{CIFAR10 Dalap}
        \label{fig:seeded_cifar_dalap}
      \end{subfigure}
      \hfill
      \begin{subfigure}[b]{0.32\textwidth}
        \includegraphics[width=\textwidth]{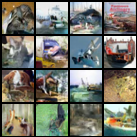}
        \caption{CIFAR10 Dlogistic}
        \label{fig:seeded_cifar_dlogistic}
      \end{subfigure}
      \hfill
      \begin{subfigure}[b]{0.32\textwidth}
    \includegraphics[width=\textwidth]{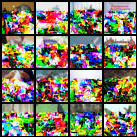}
    \caption{CIFAR10 Bitwise}
    \label{fig:seeded_cifar_bitwise}
  \end{subfigure}
  \end{minipage}
  
  \caption{Seeded sampling results from mixture models (\(K=10\)). The top portion of each image is provided as conditioning, and the model generates the remainder. Dalap produces high-quality completions across all datasets, with visual performance comparable to Dlogistic (PixelCNN++) and competitive or lower bits-per-dimension scores. Bitwise exhibits visible artifacts, particularly on CIFAR10, consistent with its poor quantitative performance.}
  \label{fig:seeded_samples}
\end{figure*}

\begin{figure*}[t]
  \centering
  \begin{minipage}{0.9\textwidth}
      \centering
      \begin{subfigure}[b]{0.49\textwidth}
        \includegraphics[width=\textwidth]{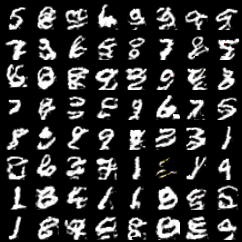}
        \caption{MNIST Dalap}
        \label{fig:random_mnist_dalap}
      \end{subfigure}
      \hfill
      \begin{subfigure}[b]{0.49\textwidth}
        \includegraphics[width=\textwidth]{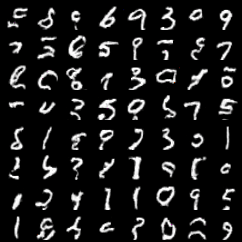}
        \caption{MNIST Dlogistic}
        \label{fig:random_mnist_dlogistic}
      \end{subfigure}
      \hfill
      
      \vspace{0.3cm}
      
      \begin{subfigure}[b]{0.49\textwidth}
        \includegraphics[width=\textwidth]{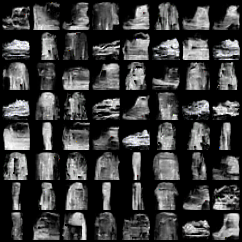}
        \caption{FashionMNIST Dalap}
        \label{fig:random_fashion_dalap}
      \end{subfigure}
      \hfill
      \begin{subfigure}[b]{0.49\textwidth}
        \includegraphics[width=\textwidth]{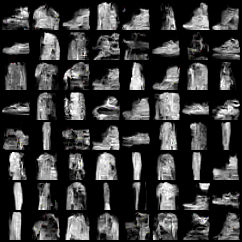}
        \caption{FashionMNIST Dlogistic}
        \label{fig:random_fashion_dlogistic}
      \end{subfigure}
      \hfill
      \vspace{0.3cm}
  \end{minipage}
  
  \caption{Random (unconditional) sampling results on MNIST and FashionMNIST from mixture models (K=10). All images are generated from scratch without any conditioning. }
  \label{fig:random_samples}
\end{figure*}

\begin{figure*}[t]
  \centering
  \begin{minipage}{0.9\textwidth}
      \centering
      \begin{subfigure}[b]{0.49\textwidth}
        \includegraphics[width=\textwidth]{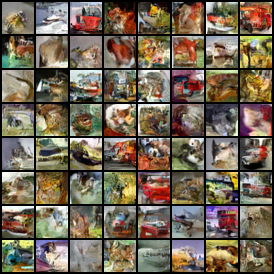}
        \caption{CIFAR10 Dalap}
        \label{fig:random_cifar_dalap}
      \end{subfigure}
      \hfill
      \begin{subfigure}[b]{0.49\textwidth}
        \includegraphics[width=\textwidth]{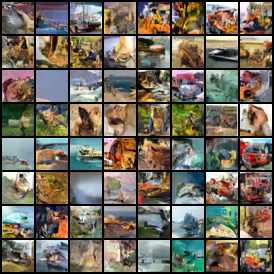}
        \caption{CIFAR10 Dlogistic}
        \label{fig:random_cifar_dlogistic}
      \end{subfigure}
      \hfill
  \end{minipage}
  
  \caption{Random (unconditional) sampling results on CIFAR10 from mixture models (K=10). All images are generated from scratch without any conditioning. }
  \label{fig:random_samples_cifar}
\end{figure*}

\FloatBarrier

%% file: sections/conclusion.tex
In this work, we investigated the problem of predicting integer labels from continuous parameters, specifically within the context of neural networks where continuous weights are required for backpropagation. We evaluated several discrete distributions across tabular regression, sequential prediction, and image generation tasks, comparing them against established baselines including the Poisson distribution, categorical distribution, and the discretized logistic mixture used in PixelCNN++.

Our results demonstrate that modeling integer data with truly discrete distributions can be competitive with continuous relaxations. While continuous relaxations, optimizing for squared error, generally yield the lowest RMSE in tabular settings, they lack the ability to provide valid probability mass functions on the integers. Similarly, while the Poisson baseline performs poorly in terms of bits across all datasets, confirming its single-parameter form cannot capture complex variance structures, it still achieves competitive RMSE on datasets like Bicycles and MAESTRO. When a discrete distribution is required, for instance to minimize compression loss (bits), we found that the Discrete Analogue of the Laplace distribution (Dalap) offers the strongest overall performance across the settings we evaluated.

Specifically, Dalap consistently achieves the best or near-best results in terms of negative log-likelihood on the tabular and image datasets, matching or exceeding the performance of the discretized logistic mixture used in PixelCNN++. It provides a robust balance between minimizing bits and maintaining low RMSE, and offers a more mathematically principled approach to modeling pixel value distributions, which are inherently discrete. In contrast, the discretized logistic relies entirely on continuous computations and only discretizes through rounding at the final prediction stage. That said, no single distribution dominates in all settings: on the highly dispersed Migration dataset, Dnormal and Bitwise outperform Dalap in bits, suggesting that the choice of distribution should ultimately be informed by the characteristics of the target data.

The use of mixture models proved important for handling challenging data. On the Migration dataset, Dalap without mixture components showed sensitivity to initialization (2 of 10 seeds diverged), but this instability was resolved with $K=8$ components, yielding stable convergence across all seeds. More broadly, mixture models improved both bits and RMSE for most distributions across most datasets.

The Bitwise distribution proves to be a valuable alternative for high-entropy, highly dispersed data, achieving the best bits on Migration in the mixture setting. However, it exhibits a high sensitivity to uncertainty in higher-order bits. If the model is uncertain about significant bits, the prediction can deviate substantially from the target, resulting in the high variance and RMSE observed in our experiments. The categorical distribution, evaluated on the image datasets, achieves competitive reconstruction quality (FID) but substantially worse bits per dimension, confirming the importance of respecting ordinal structure in integer-valued data.

Ultimately, we conclude that continuous relaxations remain strong baselines for pure error minimization, especially in the non-mixture setting, but they are not uniformly superior once discrete mixture models are considered. When a discrete probabilistic model is necessary, Dalap is the preferred choice across the settings we evaluated. It offers the strongest overall balance of distributional quality (bits) and prediction accuracy (RMSE), while Bitwise and Dnormal may be preferable for data with extreme dispersion characteristics.

\subsection{Limitations and future work}
Our research faced limitations primarily driven by computational constraints. Due to limitations with time and resources, we were unable to perform an exhaustive grid search for the optimal number of mixture components $K$ for the image modeling experiments. It is possible that fine-tuning this hyperparameter could yield further performance improvements for the mixture models. Because we compared our distribution functions to Dlogistic using the PixelCNN++ architecture, we adopted the hyperparameters from their best-performing results. Furthermore, the image experiments were conducted with a single random seed due to the high computational cost of training PixelCNN models (each requiring multiple days on a single GPU), which limits the conclusions that can be drawn about the variability of results in that setting.

Additionally, while theoretically sound, some distributions proved impractical for high-dimensional tasks. The Discrete Analogue of the Normal distribution (Danorm) requires a numerical approximation of the partition function that scales poorly with input size. In our image generation experiments, Danorm required over 90GB of VRAM, exceeding available hardware capacity and forcing its exclusion from those benchmarks. Similarly, the Discrete Weibull (Dweib) distribution exhibited numerical instability, particularly when modeling bounded intervals, leading to convergence failures. It may be, however, that with alternative implementations or additional tricks to stabilize learning, these distributions can still be made to work well.

Future work will focus on addressing these stability and efficiency issues. Furthermore, while we demonstrated the efficacy of these distributions on the specific sub-task of predicting note duration in ticks, a natural next step is to integrate Dalap into a full-scale autoregressive generative model for MIDI music, moving beyond conditional scalar prediction to full sequence generation.

%% file: sections/appendix.tex
\subsection{Exploratory analysis of the target distributions}
\label{app:eda}

In this appendix we summarize the empirical characteristics of the target distributions of every dataset used in the paper. The goal is to make explicit why no single distribution function is preferred across all benchmarks: the datasets span a wide range of supports, dispersion regimes and tail behaviours, and a candidate distribution that fits one regime well does not necessarily fit another.

Table \ref{tab:eda_summary} reports the support, sample size, range, mean, variance, dispersion index $\sigma^{2}/\mu$, skewness and excess kurtosis for each target. Bicycles is mildly overdispersed, while Upvotes and Migration are extremely overdispersed and heavy tailed. MAESTRO has a moderate dispersion index, which is consistent with the relative success of the Poisson baseline on this dataset (Section 5.1). The pixel targets of MNIST, FashionMNIST and CIFAR10 occupy the bounded support $\{0,\ldots,255\}$ and are dominated by extreme values, with most of their mass concentrated near 0 and 255 for MNIST and FashionMNIST and a broader but still bimodal shape for CIFAR10.

\begin{table}[hbt!]
  \centering
  \footnotesize
  \setlength{\tabcolsep}{3pt}
  \caption{Summary statistics for the target distributions of each dataset. $N$ is the number of samples; DI is the dispersion index $\sigma^{2}/\mu$; skew and kurt are sample skewness and excess kurtosis.}
  \label{tab:eda_summary}
  \resizebox{\linewidth}{!}{%
  \begin{tabular}{l c r r r r r r r r}
    \toprule
    Dataset & Support & $N$ & min & max & $\mu$ & $\sigma^{2}$ & DI & skew & kurt \\
    \midrule
    Bicycles & $\mathbb{Z}_{\geq 0}$ & 17\,379 & 1 & 977 & 189.46 & $3.29\!\times\!10^{4}$ & 173.66 & 1.28 & 1.42 \\
    Upvotes & $\mathbb{Z}_{\geq 0}$ & 330\,045 & 0 & 615\,278 & 337.51 & $1.29\!\times\!10^{7}$ & $3.82\!\times\!10^{4}$ & 74.25 & 8919.66 \\
    Migration & $\mathbb{Z}$ & 7\,945 & $-$2\,290\,411 & 1\,866\,819 & 1601.57 & $2.96\!\times\!10^{10}$ & $1.85\!\times\!10^{7}$ & 0.70 & 50.21 \\
    MAESTRO & $\mathbb{Z}_{\geq 0}$ & 19\,826\,496 & 0 & 18\,803 & 21.57 & 3235.24 & 149.99 & 47.58 & 7149.41 \\
    MNIST & $\{0,\ldots,255\}$ & 7\,840\,000 & 0 & 255 & 33.41 & 6186.76 & 185.17 & 2.15 & 2.89 \\
    FashionMNIST & $\{0,\ldots,255\}$ & 7\,840\,000 & 0 & 255 & 72.64 & 8045.79 & 110.77 & 0.70 & $-$1.18 \\
    CIFAR10 & $\{0,\ldots,255\}$ & 30\,720\,000 & 0 & 255 & 120.38 & 4124.90 & 34.27 & 0.23 & $-$0.79 \\
    \bottomrule
  \end{tabular}%
  }
\end{table}

Figure \ref{fig:eda_targets} shows the empirical histogram of each target on a logarithmic count axis. Plotting on a log scale is necessary because the heavy tailed datasets place several orders of magnitude more mass near the mode than in the tail, and a linear axis hides the structure that the discrete distributions in Section 3 are designed to capture. The horizontal axis is clipped to the central 98 percent of the data and the full range is reported in the panel annotations.

\begin{figure}[hbt!]
  \centering
  \includegraphics[width=\linewidth]{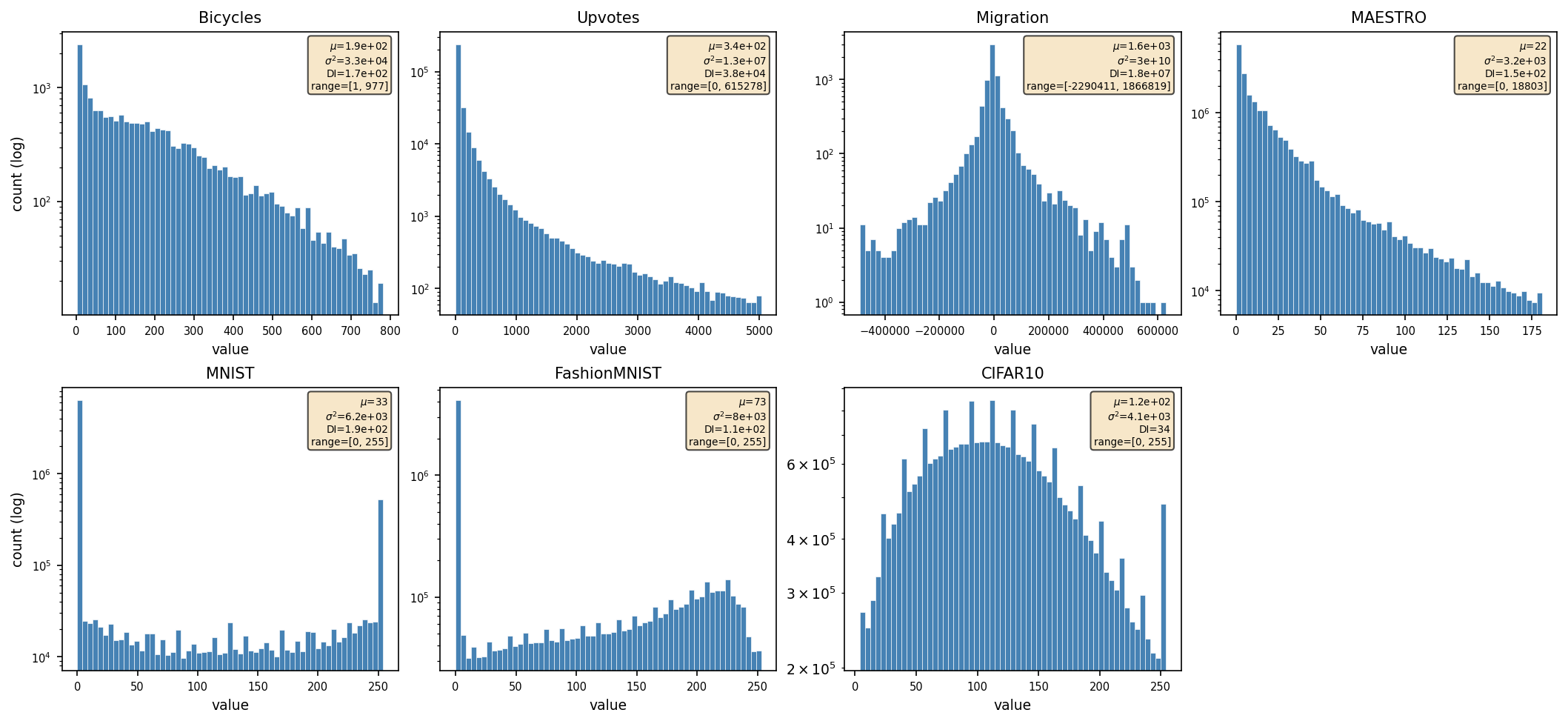}
  \caption{Empirical histograms of the target distributions for every dataset
  used in the paper. The vertical axis is logarithmic. The annotation in each
  panel reports the sample mean $\mu$, sample variance $\sigma^{2}$, dispersion
  index DI and the full range of the data before clipping.}
  \label{fig:eda_targets}
\end{figure}

Figure \ref{fig:eda_dispersion} shows the dispersion index $\sigma^{2}/\mu$ for each dataset on a logarithmic scale. 

\begin{figure}[hbt!]
  \centering
  \includegraphics[width=0.6\linewidth]{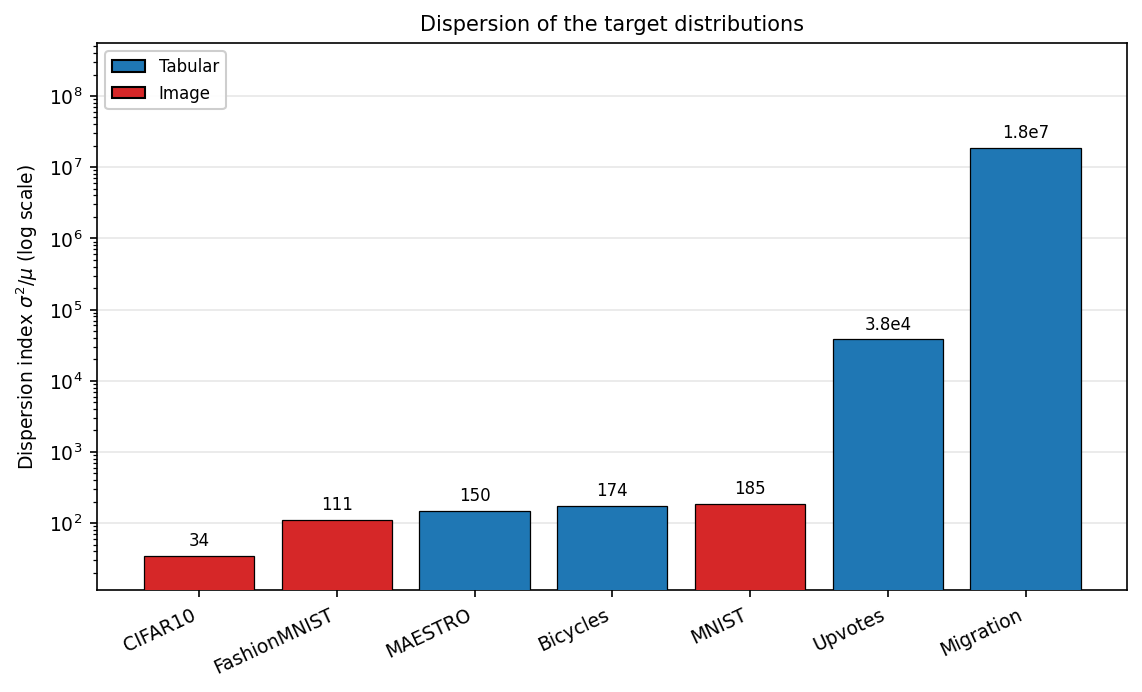}
  \caption{Dispersion index $\sigma^{2}/\mu$ for each target distribution (log scale). Tabular datasets in blue, image datasets in red.}
  \label{fig:eda_dispersion}
\end{figure}

\FloatBarrier
\newpage
\subsection{Generated image examples}\label{app:generated_samples}

\begin{figure*}[ht]
  \centering
  \begin{minipage}{1.\textwidth}
      \centering
      \begin{subfigure}[b]{0.32\textwidth}
        \includegraphics[width=\textwidth]{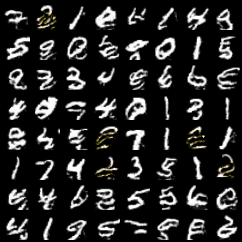}
        \caption{Seeded MNIST}
        \label{fig:app_seeded_mnist_dalap}
      \end{subfigure}
      \hfill
      \begin{subfigure}[b]{0.32\textwidth}
        \includegraphics[width=\textwidth]{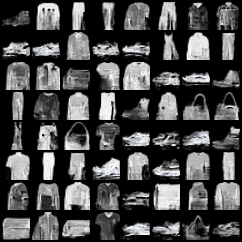}
        \caption{Seeded FashionMNIST}
        \label{fig:app_seeded_fashion_dalap}
      \end{subfigure}
      \hfill
      \begin{subfigure}[b]{0.32\textwidth}
        \includegraphics[width=\textwidth]{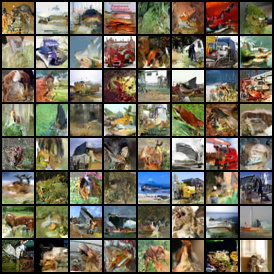}
        \caption{Seeded CIFAR10}
        \label{fig:app_seeded_cifar_dalap}
      \end{subfigure}
 
      \vspace{0.2cm}
 
      \begin{subfigure}[b]{0.32\textwidth}
        \includegraphics[width=\textwidth]{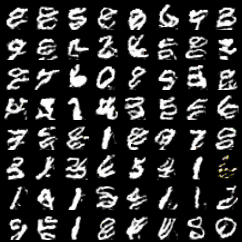}
        \caption{Random MNIST}
        \label{fig:app_random_mnist_dalap}
      \end{subfigure}
      \hfill
      \begin{subfigure}[b]{0.32\textwidth}
        \includegraphics[width=\textwidth]{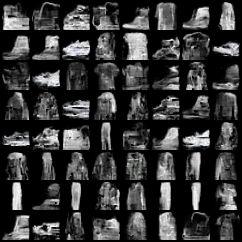}
        \caption{Random FashionMNIST}
        \label{fig:app_random_fashion_dalap}
      \end{subfigure}
      \hfill
      \begin{subfigure}[b]{0.32\textwidth}
        \includegraphics[width=\textwidth]{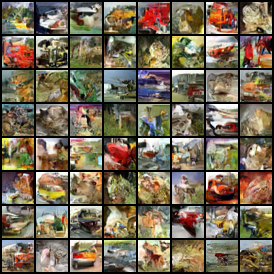}
        \caption{Random CIFAR10}
        \label{fig:app_random_cifar_dalap}
      \end{subfigure}
  \end{minipage}
  \caption{Dalap mixture model (K=10). Top row: seeded sampling where the top portion of each image is provided as conditioning. Bottom row: unconditional random sampling.}
  \label{fig:app_samples_dalap}
\end{figure*}
 
\begin{figure*}[ht]
  \centering
  \begin{minipage}{1.\textwidth}
      \centering
      \begin{subfigure}[b]{0.32\textwidth}
        \includegraphics[width=\textwidth]{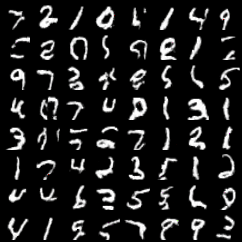}
        \caption{Seeded MNIST}
        \label{fig:app_seeded_mnist_dlogistic}
      \end{subfigure}
      \hfill
      \begin{subfigure}[b]{0.32\textwidth}
        \includegraphics[width=\textwidth]{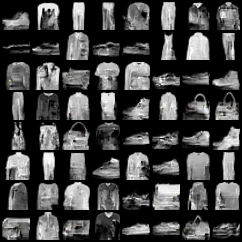}
        \caption{Seeded FashionMNIST}
        \label{fig:app_seeded_fashion_dlogistic}
      \end{subfigure}
      \hfill
      \begin{subfigure}[b]{0.32\textwidth}
        \includegraphics[width=\textwidth]{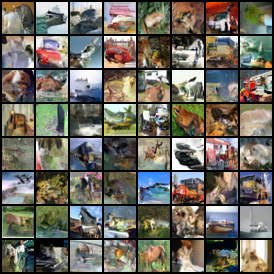}
        \caption{Seeded CIFAR10}
        \label{fig:app_seeded_cifar_dlogistic}
      \end{subfigure}
 
      \vspace{0.2cm}
 
      \begin{subfigure}[b]{0.32\textwidth}
        \includegraphics[width=\textwidth]{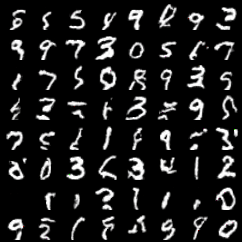}
        \caption{Random MNIST}
        \label{fig:app_random_mnist_dlogistic}
      \end{subfigure}
      \hfill
      \begin{subfigure}[b]{0.32\textwidth}
        \includegraphics[width=\textwidth]{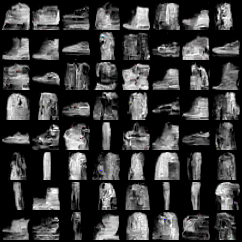}
        \caption{Random FashionMNIST}
        \label{fig:app_random_fashion_dlogistic}
      \end{subfigure}
      \hfill
      \begin{subfigure}[b]{0.32\textwidth}
        \includegraphics[width=\textwidth]{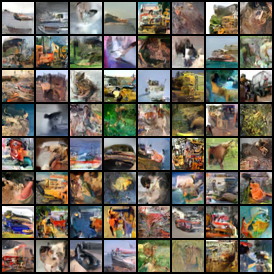}
        \caption{Random CIFAR10}
        \label{fig:app_random_cifar_dlogistic}
      \end{subfigure}
  \end{minipage}
  \caption{Dlogistic mixture model (K=10). Top row: seeded sampling where the top portion of each image is provided as conditioning. Bottom row: unconditional random sampling.}
  \label{fig:app_samples_dlogistic}
\end{figure*}
 
\begin{figure*}[ht]
  \centering
  \begin{minipage}{1.\textwidth}
      \centering
      \begin{subfigure}[b]{0.32\textwidth}
        \includegraphics[width=\textwidth]{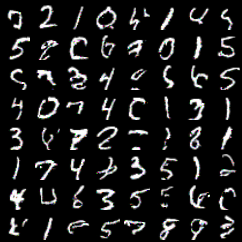}
        \caption{Seeded MNIST}
        \label{fig:app_seeded_mnist_bitwise}
      \end{subfigure}
      \hfill
      \begin{subfigure}[b]{0.32\textwidth}
        \includegraphics[width=\textwidth]{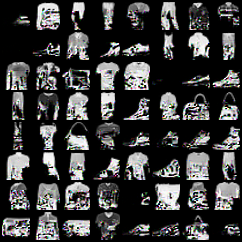}
        \caption{Seeded FashionMNIST}
        \label{fig:app_seeded_fashion_bitwise}
      \end{subfigure}
      \hfill
      \begin{subfigure}[b]{0.32\textwidth}
        \includegraphics[width=\textwidth]{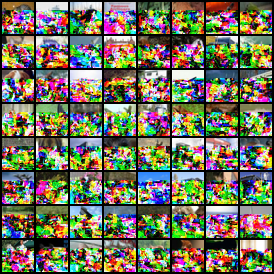}
        \caption{Seeded CIFAR10}
        \label{fig:app_seeded_cifar_bitwise}
      \end{subfigure}
 
      \vspace{0.2cm}
 
      \begin{subfigure}[b]{0.32\textwidth}
        \includegraphics[width=\textwidth]{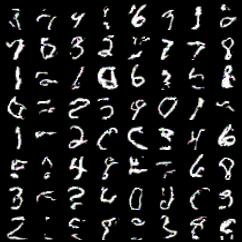}
        \caption{Random MNIST}
        \label{fig:app_random_mnist_bitwise}
      \end{subfigure}
      \hfill
      \begin{subfigure}[b]{0.32\textwidth}
        \includegraphics[width=\textwidth]{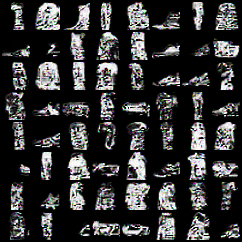}
        \caption{Random FashionMNIST}
        \label{fig:app_random_fashion_bitwise}
      \end{subfigure}
      \hfill
      \begin{subfigure}[b]{0.32\textwidth}
        \includegraphics[width=\textwidth]{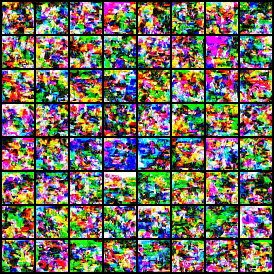}
        \caption{Random CIFAR10}
        \label{fig:app_random_cifar_bitwise}
      \end{subfigure}
  \end{minipage}
  \caption{Bitwise model (K=10). Top row: seeded sampling where the top portion of each image is provided as conditioning. Bottom row: unconditional random sampling.}
  \label{fig:app_samples_bitwise}
\end{figure*}
 
\begin{figure*}[ht]
  \centering
  \begin{minipage}{1.\textwidth}
      \centering
      \begin{subfigure}[b]{0.32\textwidth}
        \includegraphics[width=\textwidth]{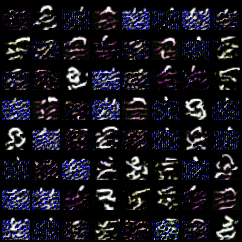}
        \caption{Seeded MNIST}
        \label{fig:app_seeded_mnist_dnormal}
      \end{subfigure}
      \hfill
      \begin{subfigure}[b]{0.32\textwidth}
        \includegraphics[width=\textwidth]{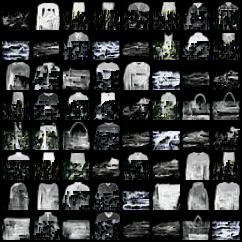}
        \caption{Seeded FashionMNIST}
        \label{fig:app_seeded_fashion_dnormal}
      \end{subfigure}
      \hfill
      \begin{subfigure}[b]{0.32\textwidth}
        \includegraphics[width=\textwidth]{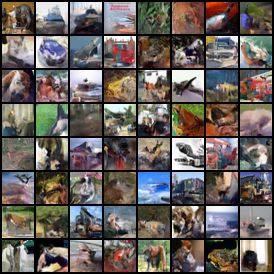}
        \caption{Seeded CIFAR10}
        \label{fig:app_seeded_cifar_dnormal}
      \end{subfigure}
 
      \vspace{0.2cm}
 
      \begin{subfigure}[b]{0.32\textwidth}
        \includegraphics[width=\textwidth]{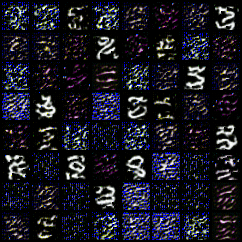}
        \caption{Random MNIST}
        \label{fig:app_random_mnist_dnormal}
      \end{subfigure}
      \hfill
      \begin{subfigure}[b]{0.32\textwidth}
        \includegraphics[width=\textwidth]{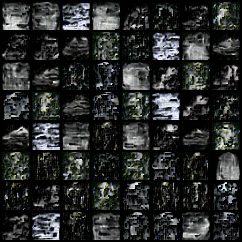}
        \caption{Random FashionMNIST}
        \label{fig:app_random_fashion_dnormal}
      \end{subfigure}
      \hfill
      \begin{subfigure}[b]{0.32\textwidth}
        \includegraphics[width=\textwidth]{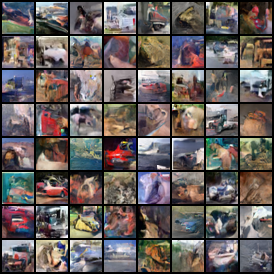}
        \caption{Random CIFAR10}
        \label{fig:app_random_cifar_dnormal}
      \end{subfigure}
  \end{minipage}
  \caption{Dnormal mixture model (K=10). Top row: seeded sampling where the top portion of each image is provided as conditioning. Bottom row: unconditional random sampling.}
  \label{fig:app_samples_dnormal}
\end{figure*}
 
\begin{figure*}[ht]
  \centering
  \begin{minipage}{1.\textwidth}
      \centering
      \begin{subfigure}[b]{0.32\textwidth}
        \includegraphics[width=\textwidth]{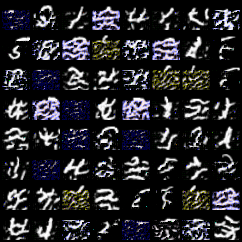}
        \caption{Seeded MNIST}
        \label{fig:app_seeded_mnist_laplace}
      \end{subfigure}
      \hfill
      \begin{subfigure}[b]{0.32\textwidth}
        \includegraphics[width=\textwidth]{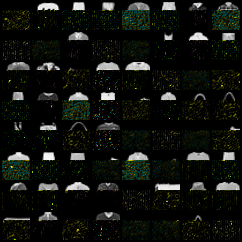}
        \caption{Seeded FashionMNIST}
        \label{fig:app_seeded_fashion_laplace}
      \end{subfigure}
      \hfill
      \begin{subfigure}[b]{0.32\textwidth}
        \includegraphics[width=\textwidth]{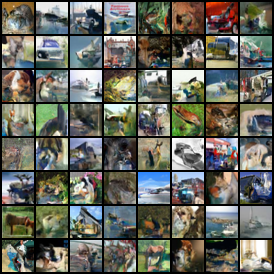}
        \caption{Seeded CIFAR10}
        \label{fig:app_seeded_cifar_laplace}
      \end{subfigure}
 
      \vspace{0.2cm}
 
      \begin{subfigure}[b]{0.32\textwidth}
        \includegraphics[width=\textwidth]{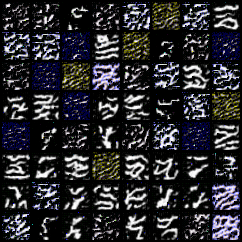}
        \caption{Random MNIST}
        \label{fig:app_random_mnist_laplace}
      \end{subfigure}
      \hfill
      \begin{subfigure}[b]{0.32\textwidth}
        \includegraphics[width=\textwidth]{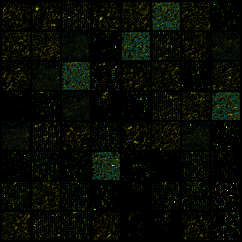}
        \caption{Random FashionMNIST}
        \label{fig:app_random_fashion_laplace}
      \end{subfigure}
      \hfill
      \begin{subfigure}[b]{0.32\textwidth}
        \includegraphics[width=\textwidth]{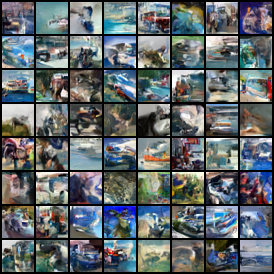}
        \caption{Random CIFAR10}
        \label{fig:app_random_cifar_laplace}
      \end{subfigure}
  \end{minipage}
  \caption{Laplace mixture model (K=10). Top row: seeded sampling where the top portion of each image is provided as conditioning. Bottom row: unconditional random sampling.}
  \label{fig:app_samples_laplace}
\end{figure*}

\FloatBarrier

\subsection{Terms of the Dalap loss}
\label{section:dalap-analysis}

In the unbounded case, the term $\log \bc{z}$ of the Dalap log loss becomes

\[
\log \bc{z} = - \log(1 - \gamma) + \log (\gamma^f + \gamma^c)  \p 
\]

The first term can be interpreted as simply promoting low values of $\gamma$, forcing the model towards high certainty where possible. This term goes to infinity as $\gamma \to 0$, so its effect is substantial, but we find similar terms in the negative log probability of the normal and Laplace distributions. This low variance must be balanced with the exponentially low probability we get for the target $n$ if the distance $|\mu - n|$ is large. \footnotemark

\footnotetext{In our implementation we ensure in the activation of $\gamma$ that its value does not go below a certain $\epsilon$ to avoid numerical instability.}

The second  term 
\begin{equation}
\log (\gamma^c + \gamma^f) \text{,} \label{eq:contra}
\end{equation}
when taken in isolation, paints a counter-intuitive picture. As Figure \ref{figure:contin-exp} shows, its value becomes a large negative value for non-integers as $\gamma \to 0$. In other words, it seems like a regularization term that rewards non-integer values of $\mu$ under high certainty, even though as our certainty grows, we should be clustered more and more around an integer. 

\begin{figure}[t]
\hspace{-3em}
    \includegraphics[width=1.1\linewidth]{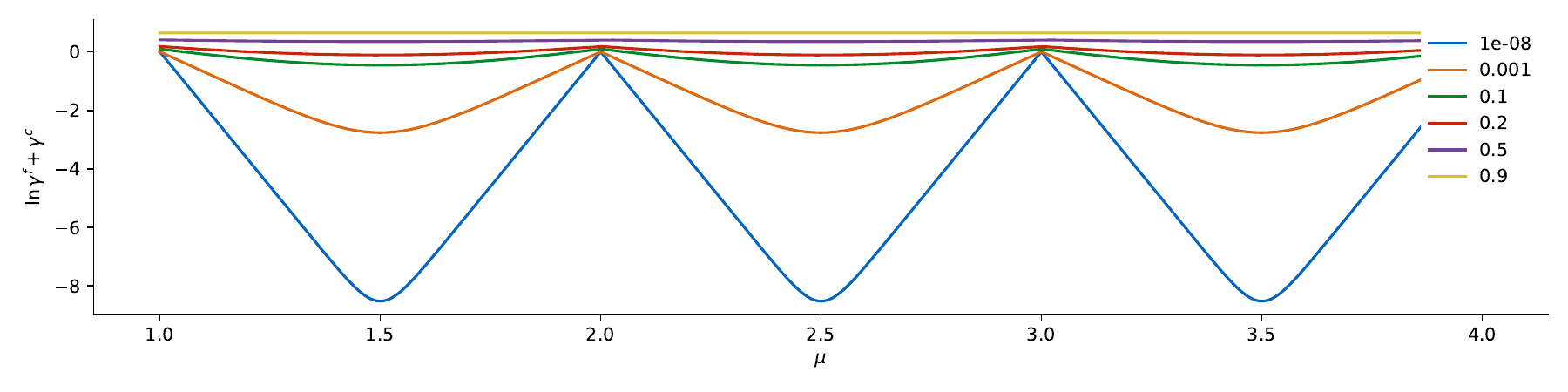}
    \caption{The value of the penalty term $\log (\gamma^f + \gamma^c)$ for different values of $\gamma$. As the value of $\gamma$ goes to $0$, non-integer values of $\mu$ appear to receive a benefit.}
    \label{figure:contin-exp}
\end{figure}

To resolve the apparent contradiction, we must look at the whole loss. Assume first that $d = |\mu -n|$ is some integer value. We will ask what the cost is of moving some small distance $s$ away from $n$ to $d' = d + s$. By Figure~\ref{figure:contin-exp}, we gain a certain amount in the loss due to the term (\ref{eq:contra}). However, the loss is a linear function of the distance which also increases, so we lose something as well. 

First, we rewrite (\ref{eq:contra}) as 

\[
\log (\gamma^c + \gamma^f) = log (\gamma^{1-s}+ \gamma^{s}) = \bc{\log(\gamma^{1-2s} + 1) + s \log \gamma} \p
\]

With that the loss becomes:

\begin{align*}
-\log p(n) &= (d + s) \log \frac{1}{\gamma} + \bc{\log(\gamma^f + \gamma^c)} - \log (1-\gamma) \\
&= -(d + s) \log \gamma + \bc{s \log \gamma} + \bc{\log(1 + \gamma^{1-2s})} - \log(1-\gamma) \\
&= -d \log \gamma - \kc{s \log \gamma} + \kc{s \log \gamma} + \log(1 + \gamma^{1-2s}) - \log(1-\gamma) \\
&= -d \log \gamma +  \gc{\log(1 + \gamma^{1-2s})} - \log(1-\gamma)  \p \\
\end{align*}

Compare this to the loss we get without moving away from the integer distance:

\begin{align*}
-\log p(n) &= d \log \frac{1}{\gamma} + \bc{\log(\gamma^f + \gamma^c)} - \log(1-\gamma) \\
&= -d \log \gamma + \gc{\log(1 + \gamma)} - \log(1-\gamma) \p \\
\end{align*}

The term in green is the difference. To see how much more we pay in loss for picking the former, we write the difference as 
\[
\log (1 + \gamma ^{1-2s}) -  \log (1 + \gamma) = \log \frac{1 + \gamma^{1-2s}}{1+\gamma} \p
\]

We then take the limit as $\gamma \to 0$. For $s=0$, the difference is 0, and for $s=\frac{1}{2}$, the limit is $\log 2$. This tells us that if we move $\mu$ to a non-integer value, we pay a small price in the loss, despite what Figure~\ref{figure:contin-exp} suggests. This shows that we always pay for increasing the distance to the target. 

If we move away from the integer value in the opposite direction (from $d$ to $d-s$) the difference becomes
\[
- 2s \log \frac{1}{\gamma} + \log \frac{1 + \gamma^{1-2s}}{1 + \gamma} \p
\]

The first term shows what we gain, and the second what we pay. This shows a balance between improving the distance to the target and staying at integer values of $\mu$. If we let $\gamma \to 0$, then we know that the right term goes to $\log 2$ at most for $s \leq 1/2$ and the left grows unbounded, so any move away from the integer is preferred. For low certainties, say $\gamma = 1/2$, we get 

\[
-2s \log 2 + \log \left(1 + \left (\frac{1}{2}\right)^{1-2s} \right) - \log \frac{3}{2} = \log\frac{1+2^{1-2s}}{3} \p
\]

As we increase $s$ from $0$ to $\frac{1}{2}$, $2^{1-2s}$ decreases monotonically from 2 to 1 and the argument of the logarithm decreases monotonically from $1$ to $\frac{1}{3}$, which means the difference function is $0$ at $s=0$ (as expected) and stays below $0$ for $s>0$. This shows that again, moving away from the integer value is always preferred if it decreases the distance to the target. 

\subsection{Parameter activations}
\label{app:activations}

Each distribution requires its raw network outputs to be mapped to valid parameter ranges. Table~\ref{tab:activations} summarizes the activation functions used for each support type. Here, $x_1$ and $x_2$ denote the raw network outputs for the location and scale parameters respectively.

\begin{table*}[!hbt]
  \centering
  \caption{Activation functions applied to raw network outputs for each distribution and support type. $\sigma$ denotes the sigmoid function.}
  \label{tab:activations}
  \begin{tabular}{l l l l}
    \hline
    Distribution & Support & Location ($\mu$) & Scale \\
    \Xhline{1pt}
    \multirow{3}{*}{Dalap}
      & $\mathbb{Z}$         & $x_1$                              & \multirow{3}{*}{$\text{clamp}(\sigma(x_2) \cdot \gamma_{\max},\; \epsilon,\; 1{-}\epsilon)$} \\
      & $[l, \infty)$         & $|x_1| + l$                        &  \\
      & $[l, u]$              & $\sigma(x_1) \cdot (u - l) + l$    &  \\
    \hline
    \multirow{3}{*}{Danorm}
      & $\mathbb{Z}$         & $x_1$                              & \multirow{3}{*}{$\text{clamp}(\sigma(x_2) \cdot \gamma_{\max},\; \epsilon,\; 1{-}\epsilon)$} \\
      & $[l, \infty)$         & $|x_1| + l$                        &  \\
      & $[l, u]$              & $\sigma(x_1) \cdot (u - l) + l$    &  \\
    \hline
    \multirow{3}{*}{Dnormal}
      & $\mathbb{Z}$         & $x_1$                              & \multirow{3}{*}{$\text{softplus}(x_2) \cdot s_{\max} + \epsilon$} \\
      & $[l, \infty)$         & $|x_1| + l$                        &  \\
      & $[l, u]$              & $\sigma(x_1) \cdot (u - l) + l$    &  \\
    \hline
    \multirow{3}{*}{Dlaplace}
      & $\mathbb{Z}$         & $x_1$                              & \multirow{3}{*}{$\text{softplus}(x_2) \cdot s_{\max} + \epsilon$} \\
      & $[l, \infty)$         & $|x_1| + l$                        &  \\
      & $[l, u]$              & $\sigma(x_1) \cdot (u - l) + l$    &  \\
    \hline
    Dlogistic
      & $[l, u]$              & $\sigma(x_1) \cdot (u - l) + l$    & $\text{softplus}(x_2) \cdot s_{\max} + \epsilon$ \\
    \hline
    Dweib
      & $[0, \infty)$         & $|x_1 + 50| + \epsilon$            & $|x_2 + 1|$ \\
    \hline
    \multirow{2}{*}{Bitwise}
      & $\mathbb{Z}_{\geq 0}$ & \multicolumn{2}{l}{Per-bit logits, no activation (BCE on raw logits)} \\
      & $\mathbb{Z}$          & \multicolumn{2}{l}{Signed-magnitude: bit 0 = sign, remaining = $|x|$} \\
    \hline
    Poisson
      & $\mathbb{Z}_{\geq 0}$ & \multicolumn{2}{l}{$\lambda = \text{softplus}(x_1) + \epsilon$} \\
    \hline
  \end{tabular}

\end{table*}

The maximum values $\gamma_{\max}$ and $s_{\max}$ are hyperparameters. For Dalap and Danorm, $\gamma_{\max}$ controls the maximum dispersion: values close to $1$ allow wide distributions but may cause numerical instability, while lower values constrain the tails. In our tabular and MAESTRO experiments we use $\gamma_{\max} = 1.0$ for Dalap and Danorm, while for the image experiments we use $\gamma_{\max} = 0.9$ to maintain stability. For the discretized distributions, $s_{\max} = 1.0$ in all experiments. These values were found through manual tuning.

\subsection{Numerical stability}
\label{app:stability}

\paragraph{Dalap interval-bounded partition.}
The two-way bounded partition function (Section~\ref{section:dalap}) involves terms of the form $\gamma^{1+\mu-l}$ and $\gamma^{1+u-\mu}$, which can overflow when $\mu$ is far outside $[l, u]$. To prevent this, we clamp $\mu$ to $[l, u]$ before computing the partition function. For values of $\mu$ outside the interval, we use a simplified form: when $\mu < l$, the distribution reduces to a truncated geometric with
\[
\log p(n) = \log(1 - \gamma) + (n - l)\log\gamma - \log(1 - \gamma^{u - l + 1})
\]
and symmetrically when $\mu > u$. Note that this only takes effect when the activation function allows $\mu$ to fall outside of $(l, u)$, or if numerical instability causes small deviations from these bounds. 

Furthermore, the denominator of the partition function is clamped to a minimum value of $\epsilon$ before taking the logarithm to avoid $\log(0)$.

\subsubsection{Discrete Weibull implementation}

\label{section:dweib_imp}

We compute the forward pass of Dweib as follows:

\begin{lstlisting}[language=python]
def forward(alpha, beta, target, eps=1e-12, logeps=-100) :
        """
        :param alpha: Alpha parameter
        :param beta: Beta parameter
        :param target: Target value
        :param eps:
        :return:
        """

        # Calculate terms
        # We calculate negative powers. For large targets/small alpha, these 
        # can be -inf.
        term1 = - ((target + eps) / alpha) ** beta
        term2 = - ((target + 1) / alpha) ** beta

        # Calculate difference
        diff = term2 - term1
        
        # FIX 1: Handle Double Infinity
        # If both are -inf, diff becomes NaN (inf - inf). 
        # We set it to -inf (prob 0).

        if term1 == |<$ - \infty$>| and term2 == |<$ - \infty$>|:
            diff = |<$ - \infty$>|

        # FIX 2: Handle Positive Noise
        # Diff must be < 0. Floating point can make it > 0 (e.g. 1e-20), 
        # causing log(-expm1(diff)) to be log(negative) -> NaN.
        if diff > 0:
            diff = 0

        # FIX 3: Numerical Stability
        # Use a stable implementation of |<$e^x - 1$>| for precision with diff near 0.
        # - log(1 - exp(diff)) = log( -(exp(diff) - 1) ) = log( -expm1(diff) )
        # - If diff is 0 (clamped), expm1(0)=0, log(0)=-inf (Correct).
        # - If diff is -inf, expm1(-inf)=-1, -(-1)=1, log(1)=0 (Correct).
        res = log(- expm1(diff)) + term1

        # Clamp lower bound for stability
        if res < logeps:
            res = logeps

        p = exp(res)

        if p < eps:
            p = eps

        save_for_backward(alpha, beta, target, term1, term2, p, eps)

        return res
\end{lstlisting}

The backward pass is then:
\begin{lstlisting}[language=python]
    def backward(logprobgrad):
        """
        :param logprobgrad: Gradient of the loss wrt to the log probability
        :return: Grads for (alpha, beta)
        """

        alpha, beta, target, term1, term2, p, eps = saved_from_forward()

        den = p + eps
        
        alpha_num = tet(term2) - tet(term1)
        alpha_res = (beta/alpha) * (alpha_num/den)

        beta_num = tet(term1) * (target/alpha) \
                   - tet(term2) * ((target+1)/alpha)
        beta_res = beta_num / den
        
        # Handle NaNs in gradients (usually due to 0/0 in den or infs)
        if alpha_res is NaN:
            alpha_res = 0.0
        if beta_res is NaN:
            beta_res = 0.0

        return alpha_res * logprobgrad, beta_res * logprobgrad
\end{lstlisting}

\begin{lstlisting}[language=python]
def tet(x):
        """
        Computes x * exp(x), defined as 0 if x = - inf.
        """
        if x == |<$-\infty$>|:
            return 0.0
   
        return x * exp(x)
\end{lstlisting}

For vectorized versions, see the code under \texttt{contin/dweib/dweib.py}.

\subsection{Bitwise training loss weighting}
\label{app:bitweight}

During training, we apply a positional weight of $2^i$ to the binary cross-entropy loss for bit position $i$:
\[
\mathcal{L}_{\text{weighted}} = \sum_{i=0}^{k-1} 2^i \cdot \text{BCE}(\hat{x}_i, x_i)
\]
This weighting reflects the fact that an error in a higher-order bit has a larger impact on the decoded integer value than an error in a lower-order bit. During evaluation, all bits are weighted equally to obtain the standard log-likelihood.